\PassOptionsToPackage{hypertexnames=false}{hyperref}
\documentclass{article}
\usepackage{iclr2027_conference,times}

\usepackage{amsmath,amssymb,graphicx,url}
\usepackage{jmlrutils}
\usepackage{hyperref}
\hypersetup{hidelinks}
\iclrfinalcopy

\usepackage[utf8]{inputenc}
\usepackage[T1]{fontenc}

\usepackage{booktabs}
\usepackage{microtype}

\usepackage{enumitem}
\setlist[itemize]{leftmargin=*}
\usepackage{placeins}
\usepackage{multirow}
\usepackage{float}
\usepackage{cleveref}

\newcommand{\E}{\mathbb{E}}
\newcommand{\R}{\mathbb{R}}
\newcommand{\Prob}{\mathbb{P}}
\newcommand{\bw}{\mathbf{w}}
\newcommand{\bm}{\mathbf{m}}
\newcommand{\bv}{\mathbf{v}}
\newcommand{\bg}{\mathbf{g}}
\newcommand{\bx}{\mathbf{x}}

\newcommand{\smallO}{o}
  
\newcommand{\msd}[2]{$#1$\,{\tiny$(#2)$}}
\newcommand{\mone}[1]{$#1$\,{\tiny\phantom{$(0.00)$}}}
\DeclareRobustCommand{\methodname}[1]{#1}
\newcommand{\uprtnmethod}{Update-RTN}
\newcommand{\upsrmethod}{Update-SR}
\newcommand{\upsrrecipe}{\methodname{ZIP-SR}}
\newcommand{\nzedenrecipe}{\methodname{ZE-EDEN}}

\theorembodyfont{\normalfont}

\theorembodyfont{\normalfont\itshape}
\crefname{assumption}{assumption}{assumptions}
\Crefname{assumption}{Assumption}{Assumptions}
\crefname{theorem}{proposition}{propositions}
\Crefname{theorem}{Proposition}{Propositions}
\crefname{definition}{definition}{definitions}
\Crefname{definition}{Definition}{Definitions}
\Crefname{appendix}{appendix}{appendices}
\Crefname{appendix}{Appendix}{Appendices}

\renewenvironment{proof}[1][\proofname]
{%
    \par\noindent{\bfseries\upshape #1\ }\ignorespaces
}
{%
    \jmlrQED
}

\title{Rounding in Preconditioner Space: Redesigning 4-bit AdamW Optimizer-State Quantization}
\author{%
\parbox[t]{\dimexpr\textwidth-2\tabcolsep\relax}{\centering
Hanyang Li\textsuperscript{1}\thanks{Work done during Hanyang Li's internship at Nubank.}\quad
Shao Tang\textsuperscript{2}\thanks{Correspondence to: Shao Tang \href{mailto:tang.shao@nubank.com.br}{\nolinkurl{tang.shao@nubank.com.br}},\newline Aman Gupta \href{mailto:aman.gupta@nubank.com.br}{\nolinkurl{aman.gupta@nubank.com.br}}.}\quad
Daniel Thomas Braithwaite\textsuperscript{2}\\[3pt]
Gregory Dexter\textsuperscript{2}\quad
Leonardo Neves\textsuperscript{2}\quad
Aman Gupta\textsuperscript{2}\footnotemark[2]\quad
Hiroto Udagawa\textsuperscript{2}\\[3pt]
Abhishek Shivanna\textsuperscript{2}\quad
Daniel Silva\textsuperscript{2}\quad
Rohan Ramanath\textsuperscript{2}\\[6pt]
{\normalfont\textsuperscript{1}University of California, Berkeley\quad
\textsuperscript{2}Nubank}%
}}

\begin{document}

\maketitle
\fancyhead{}
\renewcommand{\headrulewidth}{0pt}

\begin{abstract}
Quantizing AdamW's optimizer states reduces persistent storage, but quantization errors propagate through the moment recurrences and perturb subsequent adaptive updates. We redesign 4-bit optimizer-state quantization for AdamW from the perspective of \emph{rounding space}: the coordinate in which a quantizer chooses between adjacent reconstruction levels. For the second moment, a local analysis of the quantization cell adjacent to zero shows that small mean state error need not imply small mean preconditioner error at the next step. A one-dimensional quadratic construction further shows qualitatively different optimization dynamics under state-space and preconditioner-space rounding. These results motivate Zero-Inclusive Preconditioner-space Stochastic Rounding (\textbf{ZIP-SR}), which retains zero in the second-moment codebook and computes stochastic-rounding probabilities in preconditioner space. As a complementary route, Zero-Excluding EDEN calibration (\textbf{ZE-EDEN}) uses a zero-excluding second-moment codebook and rescales the quantized second-moment block to mitigate the preconditioner distortion caused by the positive quantization floor. Both configurations use 4-bit NormalFloat (NF4) for the first moment, with targeted stochastic rounding of the LM-head first moment during the final 10\% of training. Across GPT- and Llama-style pretraining experiments ranging from \textbf{130M} to \textbf{2.7B} parameters, both methods reduce TorchAO 4-bit AdamW's mean validation-loss gap to 32-bit AdamW at every evaluated model size, with the largest reported gap reduction reaching \textbf{70\%}. In full-parameter supervised fine-tuning, both recipes achieve lower validation loss than TorchAO while remaining close to 32-bit AdamW on downstream tasks. 

\end{abstract}

\section{Introduction}

Adam~\citep{kingma2015adam} and AdamW~\citep{loshchilov2019decoupled} are widely used optimizers for large-scale neural network training. Their FP32 first- and second-moment buffers require 8 bytes per parameter, adding substantial memory overhead and motivating research on optimizer-state compression. Block-wise dynamic quantization enabled practical 8-bit optimizers, implemented in bitsandbytes~\citep{dettmers2022bit} and TorchAO~\citep{or2025torchao}. \citet{li2023memory} extended optimizer-state quantization to 4 bits and identified the \textit{zero-point failure}, in which mapping a small positive second-moment value to zero produces an excessively large update. They showed that excluding zero from the second-moment codebook prevents this failure. TorchAO's 4-bit AdamW implements this strictly positive codebook. Subsequent studies likewise exclude zero from the second-moment codebook, using either the same linear grid or alternative positive codebooks~\citep{xu2025solo, topollai2026state}.

Our key insight is that the rounding rule is critical to whether zero-point failure occurs with a zero-inclusive codebook.
Conventional rounding chooses between neighboring levels by their values in second-moment state space, but AdamW transforms the second moment nonlinearly before it scales the update. As a result, rounding that is accurate, or unbiased, in state space can still distort the resulting update.
By defining rounding decisions in preconditioner space, we retain the same zero-inclusive codebook while reducing the probability of harmful zero selections. 
More broadly, optimizer-state quantizers should account for {how stored states affect optimization, not solely for reconstruction error in state space}.

We make two contributions.
\begin{itemize}
    \item First, we characterize how the rounding coordinate affects near-zero second-moment quantization at fixed reconstruction levels. A local zero-cell analysis and a scalar-quadratic construction show that it affects both preconditioner error and optimization dynamics. This motivates {\upsrmethod{}}, which assigns stochastic-rounding (SR) probabilities using the preconditioners induced by the candidate reconstruction levels.
    \item Second, we develop two 4-bit AdamW configurations. Zero-Inclusive Preconditioner-space Stochastic Rounding (\textbf{\upsrrecipe{}}) combines a zero-inclusive second-moment codebook with \upsrmethod{}, whereas Zero-Excluding EDEN calibration (\textbf{\nzedenrecipe{}}) combines a zero-excluding codebook with EDEN block calibration~\citep{vargaftik2022eden}. 
    Both use NF4 for the first moment and targeted LM-head stochastic rounding motivated by matched checkpoint interventions.
    At the same persistent moment-storage cost as TorchAO 4-bit AdamW, both configurations reduce its mean validation-loss gap to 32-bit AdamW across the evaluated pretraining settings and attain lower mean SFT validation loss.
\end{itemize}

PyTorch implementations of \upsrrecipe{} and \nzedenrecipe{}, along with example training scripts, are available at \url{https://github.com/nubank/adamw4bit}.

\section{Preliminaries}
\label{sec:prelim-related}

\subsection{Terminology and Notation}
We write $Q_{\phi,\rho,\mathcal C}$ for the decoded quantizer with rounding
coordinate $\phi$, rounding rule $\rho$, and decoded codebook $\mathcal C$.\footnote{Normalized formats, decoded codebooks, and rounding rules are defined in \Cref{app:quantization-background}.}
Bare $Q$ abbreviates a quantizer fixed by the surrounding context.
For $t \ge 1$ with gradient $\bg_t$, AdamW~\citep{loshchilov2019decoupled} forms the first and second-moments by
\[
    \bm_t=\beta_1\bm_{t-1}+(1-\beta_1)\bg_t,
    \qquad
    \bv_t=\beta_2\bv_{t-1}+(1-\beta_2)\bg_t^2
\]
and then updates
\[
    \bw_t=(1-\eta_t\lambda)\bw_{t-1}-\eta_t \,\frac{\bm_t/(1-\beta_1^t)}{\sqrt{\bv_t/(1-\beta_2^t)}+\epsilon},
\]
with learning rate $\eta_t$, weight-decay $\lambda$, moment decay coefficients $\beta_1, \beta_2$, and denominator stabilizer $\epsilon>0$; squares, square roots, and divisions are applied elementwise.
Optimizer-state quantization is applied when the moments are written back to the buffers retained between optimizer
steps. %
Let $\tilde \bm_t$ and $\tilde \bv_t$ be the working moments immediately before
storage. The working moments are computed from the previously decoded states as
\[
    \tilde \bm_t=\beta_1Q(\tilde \bm_{t-1})+(1-\beta_1)\bg_t, \qquad
    \tilde \bv_t=\beta_2Q(\tilde \bv_{t-1})+(1-\beta_2)\bg_t^2.
\]
The parameter update uses the working moments $\tilde{\bm}_t$ and $\tilde{\bv}_t$ before their quantized representations are stored for the next step.

\subsection{Related Work}
\textbf{Low-precision optimizer states.}
Block-wise dynamic quantization enabled practical 8-bit Adam-state
storage~\citep{dettmers2022bit}, and \citet{li2023memory}
extended this approach to 4-bit states while identifying the
second-moment zero-point failure. Subsequent work has explored paired
geometric encodings~\citep{tian2025piquant}, logarithmic-coordinate stochastic rounding and moment-specific precision~\citep{xu2025solo}, and
spatio-temporal precision allocation~\citep{liu2026stquant}. COAT
combines FP8 optimizer-state storage with low-precision activation
training~\citep{xi2025coat}; FlashOptim applies moment-specific
companding before 8-bit quantization, including a square-root transform
for the second moment~\citep{ortiz2026flashoptim}; and Full-Stack FP4
uses transformed NVFP4 representations for AdamW states within a
broader low-precision training system~\citep{ding2026fullstackfp4}.

\textbf{Dynamics of quantized optimizer states.}
Recent studies examine how quantization errors feed back through the
recurrent exponential moving averages. SOLO~\citep{xu2025solo} analyzes signal swamping
in unsigned states and variance amplification in signed
states. \citet{topollai2026state} study state staleness and periodic
resets, while BReD targets persistent drift caused by biased recurrent
rounding~\citep{zhan2026bred}. \citet{tang2026adaptive} establish convergence guarantees for adaptive optimizers under floating-point quantization.

\textbf{Complementary memory-efficient optimizers.}
Optimizer memory can also be reduced by modifying the structure of the
stored statistics or the update rule. Adafactor factorizes
second-moment estimates~\citep{shazeer2018adafactor}, SM3 shares
structured adaptive accumulators~\citep{anil2019memory}, and Adam-mini
shares adaptive learning rates within parameter
blocks~\citep{zhang2025adammini}; Q-Adam-mini combines the latter
structure with 8-bit quantization~\citep{han2025qadammini}. Gefen shares
second-moment estimates and quantizes the first moment using a learned
codebook~\citep{benedek2026gefen}, while GaLore and APOLLO maintain
optimizer information in lower-dimensional auxiliary
spaces~\citep{zhao2024galore,zhu2025apollo}. Low-bit state
representations have also been developed for other preconditioners,
including 4-bit Shampoo~\citep{wang2024shampoo} and quantized Muon~\citep{gupta2025muonquant, wu2026achieving, su2026muonq}.

\section{Second-Moment Quantization}
\label{sec:second-moment-quantization}

Near zero, second-moment quantization presents two distinct distortions. Selecting zero can produce an excessively large subsequent preconditioner, whereas excluding zero imposes a positive floor that can suppress large preconditioner entries.
We study these cases separately. ZIP-SR retains zero and changes the rounding coordinate while holding the decoded levels fixed. ZE-EDEN follows the established zero-excluding approach~\citep{li2023memory,xu2025solo,topollai2026state} and calibrates the decoded block scale to mitigate the distortion introduced by its positive floor.

\subsection{ZIP-SR: Zero-Inclusive Preconditioner-space Stochastic Rounding}
\label{subsec:zero-inclusive}

To isolate the effect of the rounding configuration from that of the codebook, we fix the representable levels. We first compare next-step preconditioner bias and derive a rule-independent criterion for how one stored value affects the next-step state and preconditioner. Then, we specialize this criterion to round-to-nearest (RTN) and SR in both state- and preconditioner-space before giving a scalar-quadratic construction that illustrates their different optimization dynamics.

Under the update-before-storage convention in
\Cref{sec:prelim-related}, quantizing the working second moment $\tilde v_t$ affects the next update, after entering the second-moment recurrence together with $g^2_{t+1}$. For $v\geq 0$, define the current step and next-step preconditioner maps by
\[
    h_t(v):=\frac{1}{\sqrt{v/(1-\beta_2^t)}+\epsilon},
    \qquad
    r_{t+1}(v):=h_{t+1}\bigl(\beta_2 v+(1-\beta_2)g_{t+1}^2\bigr).
\]
We compare storing $Q(\tilde v_t)$ with storing the unquantized
working value $\tilde v_t$ through the mean state and
preconditioner errors
\[
    \Delta_v:=\E[Q(\tilde v_t)]-\tilde v_t,
    \qquad
    \Delta_r:=\E[r_{t+1}(Q(\tilde v_t))]
              -r_{t+1}(\tilde v_t).
\]
All expectations are conditional on the realized values of
$\tilde v_t$ and $g_{t+1}$ and are taken only over the step-$t$
rounding randomness. %

\paragraph{State-space versus preconditioner-space rounding.}
For adjacent levels $0\leq a<b$ in a nonnegative decoded codebook $\mathcal C_v$, following the notation in \Cref{app:quantization-rounding}, state-space rounding uses the coordinate $\phi(v)=v$ and preconditioner-space rounding uses $\phi(v)=h_t(v)$. 
We use $h_t$ as the rounding coordinate at storage time while $r_{t+1}$ is used only for analysis because it depends on the next gradient $g_{t+1}$. 
For $\rho \in \{\mathrm{RTN}, \mathrm{SR}\}$, we write State-$\rho$ for $Q_{\mathrm{id}, \rho, \mathcal C_v}$ and Update-$\rho$ for $Q_{h_t, \rho, \mathcal C_v}$. RTN selects the endpoint whose value in the chosen coordinate is closest to $\phi(v)$. For $v \in (a,b)$, SR selects the lower level $a$ with probability
\[
    \Prob(Q_{\phi, \mathrm{SR}, \mathcal{C}_v}(v) = a)
    =\left\{
    \begin{array}{cl}
        \frac{b-v}{b-a}, & \text{if }\phi = \mathrm{id},\\[3pt]
        \frac{h_t(v) - h_t(b)}{h_t(a) - h_t(b)}, & \text{if }\phi = h_t.
    \end{array}\right.
\]
Thus State-SR preserves the mean state value, whereas Update-SR preserves the mean of the current step preconditioner, i.e., $\E[h_t(Q_{h_t,\mathrm{SR},\mathcal C_{v}}(\tilde v_t))]=h_t(\tilde v_t)$. 
The next proposition compares their means after one step of moment recurrence. Proofs are given in \Cref{app:one-step-error-proofs}.

\pagebreak[0]
\begin{samepage}
\begin{proposition}[Next-step preconditioner bias]
\phantomsection\label{prop:zero-cell-orders}
Fix $t\geq1$ and $\beta_2\in(0,1)$.\\
\textup{(a) (Arbitrary gradient and cell)} For every $\epsilon>0$, $g_{t+1}\in\R$, and $0\leq a<\tilde v_t<b$, let $Q_{\mathrm U}$ and $Q_{\mathrm S}$ denote Update-SR and State-SR between $a$ and $b$. Then
\[
    r_{t+1}(b)
    < \E[r_{t+1}(Q_{\mathrm U}(\tilde v_t))]
    \leq r_{t+1}(\tilde v_t)
    \leq \E[r_{t+1}(Q_{\mathrm S}(\tilde v_t))].
\]
\textup{(b) (Zero-rounding criterion)}
Fix $b > \tilde{v}_t > 0$ independent of $\epsilon$. As $\epsilon\downarrow0$ with $|g_{t+1}| = \mathcal{O}(\epsilon)$, any (possibly randomized) quantizer $Q$ with $Q(\tilde v_t)\in\{0,b\}$ has bounded preconditioner error $\Delta_r$ if and only if \(\Prob(Q(\tilde{v}_t)=0)=\mathcal{O}(\epsilon)\). If \(\Prob(Q(\tilde{v}_t)=0)=\Theta(1)\), then $\Delta_r \to +\infty$ at rate \(\Theta(\epsilon^{-1})\).
\end{proposition}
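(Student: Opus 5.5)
Part (a) reduces to convexity and monotonicity of the maps, once everything is written in terms of a single convex function. Write $r:=r_{t+1}$ and set $c_t:=1-\beta_2^t$ and $c_{t+1}:=1-\beta_2^{t+1}$. The basic observation is that $h_t$ and $r$ are strictly decreasing on $[0,\infty)$. The map $u\mapsto 1/(\sqrt{u/c}+\epsilon)$ is also strictly convex, because it composes the convex decreasing map $s\mapsto 1/(s+\epsilon)$ with the concave map $u\mapsto\sqrt{u/c}$. Since $r$ is this convex map precomposed with an affine increasing map, $r$ is strictly convex and decreasing.

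For State-SR the decoded value is a two-point law on $\{a,b\}$ with mean $\tilde v_t$. Jensen's inequality then gives $\E[r(Q_{\mathrm S})]\ge r(\tilde v_t)$. For Update-SR, put $y=h_t(v)$ and define $\psi:=r\circ h_t^{-1}$, so that $r(v)=\psi(y)$. Update-SR is a two-point law in $y$ with mean $h_t(\tilde v_t)$, so the middle inequality $\E[r(Q_{\mathrm U})]\le r(\tilde v_t)$ is the reverse Jensen inequality. It therefore amounts to showing that $\psi$ is concave on $[h_t(b),h_t(a)]$.

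To see this, invert $h_t$ as $v=c_t(1/y-\epsilon)^2$. Then
\[
\psi(y)=\Bigl(\sqrt{\bigl(\beta_2c_t(1/y-\epsilon)^2+(1-\beta_2)g^2\bigr)/c_{t+1}}+\epsilon\Bigr)^{-1}.
\]
Set $s=1/y-\epsilon\ge0$, $\alpha=\beta_2c_t/c_{t+1}$ and $\gamma=(1-\beta_2)g^2/c_{t+1}$, so that $\psi=1/(\sqrt{\alpha s^2+\gamma}+\epsilon)$. The function $F(s)=\sqrt{\alpha s^2+\gamma}$ is convex in $s$ with $0\le F'\le\sqrt\alpha$. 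I would verify concavity of $\psi$ in $y$ by direct differentiation using $ds/dy=-1/y^2$; if the direct route proves messy, one could instead try to obtain concavity through a simpler representation of $\psi$. I expect this second-derivative sign check, which must hold uniformly in $g$ and $\epsilon$, to be the main obstacle. The leftmost strict inequality $r(b)<\E[r(Q_{\mathrm U})]$ is immediate: $Q_{\mathrm U}$ puts positive mass on $a$ when $\tilde v_t<b$, and $r(a)>r(b)$.

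For part (b), let $p=\Prob(Q(\tilde v_t)=0)$. Then
\[
\Delta_r=p\,r(0)+(1-p)\,r(b)-r(\tilde v_t).
\]
Because $b$ and $\tilde v_t$ are fixed and positive, the terms $r(b)$ and $r(\tilde v_t)$ stay bounded as $\epsilon\downarrow0$: they converge to $\sqrt{c_{t+1}/(\beta_2 b)}$ and $\sqrt{c_{t+1}/(\beta_2\tilde v_t)}$ respectively. At zero, $r(0)=1/(\sqrt{(1-\beta_2)g^2/c_{t+1}}+\epsilon)$. With $|g|\le K\epsilon$ this gives $r(0)\ge 1/((C K+1)\epsilon)$, where $C=\sqrt{(1-\beta_2)/c_{t+1}}$, and in every case $r(0)\le 1/\epsilon$. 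Hence $r(0)=\Theta(\epsilon^{-1})$, and $\Delta_r=p\,\Theta(\epsilon^{-1})+O(1)$.

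Boundedness of $\Delta_r$ is therefore equivalent to $p=O(\epsilon)$. Sufficiency is clear. For necessity, if $p/\epsilon$ were unbounded along some sequence, then $p\,r(0)\ge p/((CK+1)\epsilon)\to\infty$. Finally, if $p=\Theta(1)$, the same estimate gives $\Delta_r=\Theta(\epsilon^{-1})\to+\infty$.
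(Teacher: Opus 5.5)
Your reduction matches the paper's proof. It uses convexity of $r_{t+1}$ with Jensen for State-SR, concavity of $\psi=r_{t+1}\circ h_t^{-1}$ with Jensen for Update-SR, and strict monotonicity with positive mass on $a$ for the left inequality. For part~(b) it uses $\Delta_r=p\,r_{t+1}(0)+\mathcal O(1)$ with $r_{t+1}(0)=\Theta(\epsilon^{-1})$. Part~(b) and the two outer inequalities of part~(a) are complete as written. The gap is the middle inequality $\E[r_{t+1}(Q_{\mathrm U}(\tilde v_t))]\le r_{t+1}(\tilde v_t)$, which is the only nontrivial claim in part~(a). You reduce it to concavity of $\psi$ and then leave that concavity as something you ``would verify.'' That step cannot be skipped. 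No composition rule gives it, since $r_{t+1}$ and $h_t^{-1}(y)=c_t(1/y-\epsilon)^2$ are both convex and decreasing, and such compositions have no fixed curvature sign. The conclusion in fact depends on $\alpha=\beta_2c_t/c_{t+1}$ being at most $1$. At $g_{t+1}=0$ your formula reduces to $\psi(y)=y/\bigl(\sqrt\alpha+(1-\sqrt\alpha)\epsilon y\bigr)$, which is concave if and only if $\alpha\le1$. Your sketch never uses $\alpha<1$, i.e.\ $\beta_2(1-\beta_2^t)<1-\beta_2^{t+1}$, so as it stands it does not establish the inequality.

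The paper proves this step through the inverse map. It writes $\psi^{-1}(z)=\sqrt\alpha\,z/D(z)$ with $D(z)=\sqrt{(1-\epsilon z)^2-\gamma z^2}+\sqrt\alpha\,\epsilon z$ for $0<z\le(\epsilon+\sqrt\gamma)^{-1}$. The square root of $q(z)=(1-\epsilon z)^2-\gamma z^2$ is concave because $2qq''-(q')^2=-4\gamma\le0$. Moreover $D>0$ and $D(z)\le1-\epsilon(1-\sqrt\alpha)z\le D(0)=1$; this is where $\alpha<1$ enters, and it makes the concave $D$ nonincreasing. Hence $z/D(z)$ is convex and increasing, so $\psi$ is concave.

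Your direct-differentiation route also closes in a few lines if you stay in the variable $s$. Set $R(s)=(s+\epsilon)/(F(s)+\epsilon)$ and use $ds/dy=-(s+\epsilon)^2$. Then $\psi'(y)=F'(s)R(s)^2$ and $\psi''(y)=-(s+\epsilon)^2\bigl(F''R^2+2F'RR'\bigr)$. Here $F''\ge0$ and $F'\ge0$. Also $R'\ge0$: since $F'=\alpha s/F$, its sign is that of $F(F+\epsilon)-\alpha s(s+\epsilon)=\gamma+\epsilon(F-\alpha s)$, which is nonnegative because $F\ge\sqrt\alpha\,s\ge\alpha s$ when $\alpha\le1$. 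Either argument fills the gap.
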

\end{samepage}

Part~(a) also shows how retaining zero in the second-moment codebook can reduce the preconditioner bias caused by a positive second-moment floor. For $0<\tilde v_t<b$, a zero-excluding quantizer with decoded minimum $b$ stores $b$, giving a next-step preconditioner $r_{t+1}(b)<r_{t+1}(\tilde v_t)$. Compare this with Update-SR between $0$ and the same positive level $b$. Denoting their conditional mean preconditioner errors by $\Delta_r^{\mathrm{NZ}}$ and $\Delta_r^{\mathrm U}$, respectively, part~(a) gives
\[
    \Delta_r^{\mathrm{NZ}}<\Delta_r^{\mathrm U}\leq0.
\]
Thus Update-SR strictly reduces the magnitude of the bias while keeping the conditional mean preconditioner at or below its unquantized value. This comparison holds at a matched decoded floor for every $\epsilon>0$ and every next gradient $g_{t+1}$.

\paragraph{The zero-adjacent cell.}
Let \(\mathcal C_{0b}:=\{0,b\}\) denote the decoded zero-adjacent codebook. Under the assumptions of \Cref{prop:zero-cell-orders}(b), State-SR is unbiased in \(v\) but rounds to zero with probability \(1-\tilde{v}_t/b=\Theta(1)\), so by \Cref{prop:zero-cell-orders}(b), its next-step preconditioner error diverges as \(\Theta(\epsilon^{-1})\); State-RTN behaves the same throughout the lower half of the cell, where it deterministically rounds to zero. By contrast, Update-SR with $a=0$ rounds to zero with probability
\begin{equation}
\label{eq:q-sr-definition}
    \Prob\!\left(
        Q_{h_t,\mathrm{SR},\mathcal C_{0b}}(\tilde v_t)=0
        \,\middle|\,
        \tilde v_t%
    \right)
    =\frac{\epsilon\big(1-\sqrt{\tilde{v}_t/b}\big)}
           {\sqrt{\tilde{v}_t/(1-\beta_2^t)}+\epsilon}
    =\Theta(\epsilon).
\end{equation}
Hence \upsrmethod{} has bounded mean preconditioner error $\Delta_r$. \uprtnmethod{} selects zero only within a \(\Theta(\epsilon^2)\) neighborhood of the origin, so every fixed \(\tilde{v}_t>0\) eventually rounds to \(b\) for sufficiently small \(\epsilon\). We summarize the results in \Cref{tab:zero-cell-orders} with details in \Cref{app:one-step-error-proofs}.

\begin{table}[!h]
\centering
\footnotesize%
\caption{Conditional mean errors for fixed \(0<\tilde{v}_t<b/2\) as \(\epsilon\downarrow0\), with $|g_{t+1}|=\mathcal O(\epsilon)$.}
\label{tab:zero-cell-orders}
\begin{tabular}{@{}cccc@{}}
\toprule
\textbf{Rule} & \(\Prob(Q(\tilde v_t)=0 \mid \tilde{v}_t)\)
& \textbf{State error $\Delta_v$} & \textbf{Precond. error $\Delta_r$} \\
\midrule
State-RTN & \(1\) & \(-\tilde{v}_t\) & \(+\Theta(\epsilon^{-1})\) \\
State-SR & \(1-\tilde{v}_t/b\) & \(0\) & \(+\Theta(\epsilon^{-1})\) \\
Update-RTN & \(0\) & \(b-\tilde{v}_t\) & \(-\Theta(1)\) \\
Update-SR & \(\Theta(\epsilon)\)
& \(b-\tilde{v}_t-\Theta(\epsilon)\) & \(-\Theta(1)\) \\
\bottomrule
\end{tabular}
\end{table}

Thus, preconditioner-space rounding suppresses the probability of storing zero at the cost of upward state bias.
The next proposition examines how this tradeoff can affect optimization on a scalar quadratic. Part~(a) compares complete deterministic trajectories: State-RTN approaches a nonzero two-cycle, whereas Update-RTN converges geometrically to the minimizer. Part~(b) demonstrates that, in the near-optimum regime, State-SR becomes locally \textit{expansive} while Update-SR remains locally \textit{contractive}.  Proofs are provided in \Cref{app:quadratic-rounding-separation}.

\begin{proposition}[Rounding space separation on a scalar quadratic]
\phantomsection\label{prop:quadratic-rounding-space}
Fix $b,\mu>0$. Consider Adam on $\mathcal L(w)=\mu w^2/2$ with $\beta_1=0$, $\beta_2\in(1/2,1)$, and second-moment codebook $\mathcal C_{0b}=\{0,b\}$.\\
\textup{(a) (RTN trajectory)}
Fix $\epsilon>0$ and omit bias correction in Adam. Let $\tau_\epsilon=b\epsilon^2/(\sqrt b+2\epsilon)^2$. For any constant stepsize satisfying ${2(\sqrt{\tau_\epsilon}+\epsilon)}/{\mu} < \eta < {2(\sqrt{b/2}+\epsilon)}/{\mu}$, there is an open interval $I \subset (0,\infty)$ such that, from every common initialization $w_0 \in I$ and $\tilde v_0=0$, State-RTN converges to a stable nonzero two-cycle, whereas \uprtnmethod{} converges geometrically to $w=0$.\\
\textup{(b) (SR one-step shift)}
Fix $t \ge 1$ and consider a family of realized step-$t$ configurations satisfying $\tilde{v}_t \rightarrow 0$, $g_{t+1}^2=\Theta(\tilde v_t)$,
$\epsilon=\smallO(\tilde{v}_t)$, and
$\eta_{t+1}\mu r_{t+1}(0)\to\alpha>2$. Taking expectations only over the randomness of SR at step-$t$ gives
\[
    \E\!\left[
        \log\frac{|w_{t+1}|}{|w_t|}
        \,\middle|\,
        \tilde v_t,g_{t+1}
    \right]
    =\left\{
    \begin{array}{cl}
        \log(\alpha-1)+\smallO(1), & \text{State-SR}\\
    -\Theta\big(\sqrt{\tilde{v}_t}\big), & \text{Update-SR}.
    \end{array}\right.
\]
\end{proposition}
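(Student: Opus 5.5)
The plan is to treat the two parts separately, working throughout with the scalar recurrence $g_t=\mu w_{t-1}$, $\tilde v_t=\beta_2 Q(\tilde v_{t-1})+(1-\beta_2)g_t^2$, $w_t=w_{t-1}-\eta\, g_t\, h(\tilde v_t)$ with $\beta_1=0$. Since $Q$ takes values in $\{0,b\}$, the update multiplies $w$ by a factor $1-\eta\mu h(\tilde v_t)$, and $\tilde v_t$ depends only on the stored bit and on $w_{t-1}^2$.

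\textbf{Part (a).} Without bias correction, $h(v)=1/(\sqrt v+\epsilon)$.
\begin{enumerate}
\item I would first identify the two thresholds. State-RTN rounds $\tilde v$ to $0$ iff $\tilde v<b/2$. Update-RTN rounds to $0$ iff $h(\tilde v)$ is closer to $h(0)=1/\epsilon$ than to $h(b)=1/(\sqrt b+\epsilon)$. Solving $h(\tilde v)=\tfrac12\bigl(h(0)+h(b)\bigr)$ should give $\sqrt{\tilde v}=\epsilon\sqrt b/(\sqrt b+2\epsilon)$, i.e.\ exactly $\tilde v=\tau_\epsilon$. This explains the constants.
\item For Update-RTN, I would choose $I$ so that $(1-\beta_2)\mu^2w_0^2>\tau_\epsilon$. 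Then the first stored value is $b$. Afterwards $\tilde v_t\ge\beta_2 b>b/2>\tau_\epsilon$, using $\beta_2>1/2$. So the stored value stays $b$ forever, and $h(\tilde v_t)\in[h(b),h(\beta_2 b)]$.
\item The upper bound on $\eta$ gives $\eta\mu h(\tilde v_t)\le\eta\mu/(\sqrt{b/2}+\epsilon)<2$, since $\beta_2 b>b/2$. Hence $|1-\eta\mu h(\tilde v_t)|\le c<1$ uniformly, which yields geometric convergence.
\item For State-RTN, I would take $w_0$ small enough that $(1-\beta_2)\mu^2 w_0^2<b/2$, so the first stored value is $0$. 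While $\tilde v$ stays below $b/2$, the recurrence is $\tilde v_t=(1-\beta_2)\mu^2 w_{t-1}^2$ and the map is $w\mapsto w\bigl(1-\eta\mu/(\sqrt{(1-\beta_2)}\mu|w|+\epsilon)\bigr)$.
\item This is a one-dimensional odd map $F$. The lower bound $\eta>2(\sqrt{\tau_\epsilon}+\epsilon)/\mu$ makes $F$ expansive near $0$ (slope below $-1$), so $w=0$ is repelled. The two-cycle is the solution of $F(w^\*)=-w^\*$, namely $\eta\mu h\bigl((1-\beta_2)\mu^2w^{\*2}\bigr)=2$. I need $(1-\beta_2)\mu^2 w^{\*2}<b/2$; I expect the upper bound on $\eta$ to give exactly this.
\item Stability of the cycle follows from $|F'(w^\*)|<1$, which I would check by differentiating. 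The interval $I$ is then a neighborhood of $w^\*$ inside both basins, intersected with the Update-RTN condition from step 2.
\end{enumerate}

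\textbf{Part (b).} Here the factor is $w_{t+1}/w_t=1-\eta_{t+1}\mu\, r_{t+1}(Q(\tilde v_t))$.
\begin{enumerate}
\item If $Q=0$: since $g_{t+1}^2=\Theta(\tilde v_t)$ is not $o(\tilde v_t)$, I must be careful. Literally $r_{t+1}(0)$ is evaluated with $g_{t+1}$ included, and $\alpha$ is defined through $r_{t+1}(0)$. So the log factor is $\log|1-\alpha+o(1)|=\log(\alpha-1)+o(1)$.
\item If $Q=b$: $r_{t+1}(b)=h(\beta_2 b+\Theta(\tilde v_t))$. Its ratio to $r_{t+1}(0)$ is $\Theta\bigl(\sqrt{\tilde v_t/b}\bigr)\to0$, using $\epsilon=o(\tilde v_t)$. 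So the factor is $1-\Theta(\sqrt{\tilde v_t})$ and the log is $-\Theta(\sqrt{\tilde v_t})$.
\item State-SR selects $0$ with probability $1-\tilde v_t/b\to1$. The $b$-branch contributes $(\tilde v_t/b)\cdot O(\sqrt{\tilde v_t})=o(1)$, giving $\log(\alpha-1)+o(1)$.
\item Update-SR selects $0$ with the probability in \eqref{eq:q-sr-definition}, which is $\Theta\bigl(\epsilon/\sqrt{\tilde v_t}\bigr)=o(\sqrt{\tilde v_t})$. Its bounded log contribution is therefore dominated by the $-\Theta(\sqrt{\tilde v_t})$ from the $b$-branch.
\end{enumerate}

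\textbf{Main obstacle.} I expect the hardest step to be the State-RTN two-cycle analysis in part (a). It requires showing that the cycle stays below the $b/2$ threshold and is attracting, and that the trajectory from $I$ never crosses the threshold. For this I would first try to prove that $F^2$ is monotone on $(0,w^\*]$, and reduce the convergence claim to monotone iteration.
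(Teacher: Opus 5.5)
Your proposal is correct and follows essentially the paper's proof. In (a) the paper likewise pins State-RTN to zero writes and Update-RTN to $b$ near the amplitude $w_\eta$ defined by $c\mu w_\eta+\epsilon=\eta\mu/2$, with $c=\sqrt{1-\beta_2}$. Instead of linearizing, it uses the exact identity $|w_k|-w_\eta=(y-w_\eta)(\epsilon-c\mu y)/(\epsilon+c\mu y)$, so a small interval $I$ suffices and your planned monotone-iteration argument is unnecessary. In (b) it performs the same branch-by-branch expansion as you do.

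Two small fixes are needed. First, the lower bound on $\eta$ is precisely what gives $(c\mu w_\eta)^2>\tau_\epsilon$, so that a neighborhood of $w_\eta$ satisfies your Update-RTN condition. Second, $h(\tilde v_t)\geq h(b)$ can fail because $\tilde v_t$ may exceed $b$ when $\beta_2>1/2$, so take the uniform lower bound on $h(\tilde v_t)$ from $|w_t|\leq|w_1|$ instead.
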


\paragraph{Controlled comparison of rounding choices.}
With FP32 first moments and the same zero-inclusive Dyn4 second-moment codebook, State-RTN and State-SR develop large training-loss gaps, whereas both preconditioner-space rules avoid this deterioration (\Cref{fig:rtn-sr-dynamics}).
Thus, in this controlled run, changing the rounding space avoids the observed instability without removing zero; state-space stochastic rounding alone does not resolve it.

We further compare \uprtnmethod{} and \upsrmethod{} in single-seed GPT-style 834M pretraining with FP32, SDyn4, or NF4 first moments. Their final validation-loss differences, $L_{\mathrm{SR}}-L_{\mathrm{RTN}}$, are $(+1.08,+0.97,-0.10)\times10^{-4}$, respectively, and do not identify a consistent endpoint winner. We select \upsrmethod{} because it tracks the matched 32-bit trajectory more closely during warmup in the FP32-first-moment control.

\begin{figure}[!htbp]
    \centering
    \includegraphics[width=0.6\linewidth]
        {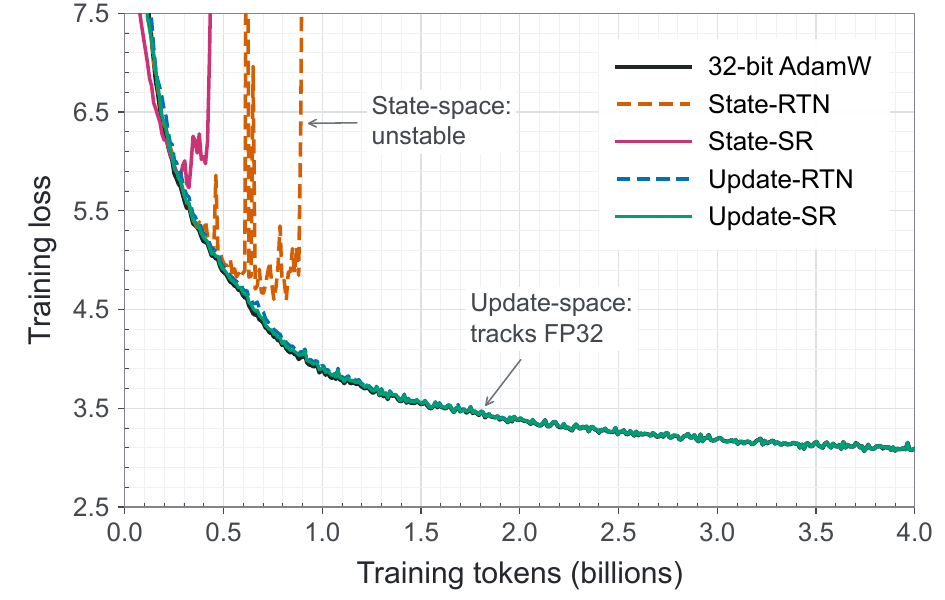}
    \vspace{-7pt}
    \caption{Rounding space controls training stability. Training loss over the first $4$B tokens of GPT-style 834M pretraining, with FP32-$\bm$ and zero-inclusive Dyn4-$\bv$. State-RTN/SR become unstable and exceed the plotted range, while Update-RTN/SR closely track 32-bit AdamW.}
    \label{fig:rtn-sr-dynamics}
\end{figure}

\subsection{ZE-EDEN: Zero-Excluding EDEN calibration}
\label{subsec:zero-excluding}

The second route excludes zero, preventing the zero-endpoint failure by construction but imposing a positive floor that inflates small second moments and suppresses the corresponding large preconditioner entries. For a second-moment block $\bx$, we use the guarded EDEN-inspired calibration~\citep{vargaftik2022eden} to mitigate this distortion:
\[
    \widetilde{\bx}= c_{\tau} \,Q(\bx)
    \quad\text{for}\quad
    \tau > 0,\;
    c_{\tau} = \frac{\lVert \bx\rVert_2^2}{\max\{\langle \bx,Q(\bx)\rangle, \tau s^2\}}.
\]
Here $s>0$ is the guarded absmax base scale, $Q(\bx)$ is the uncalibrated decoded block, and $\tau=10^{-12}$ safeguards the normalized inner product. The implementation evaluates the normalized form and absorbs $c_\tau$ into the block scale. The positive floor can overestimate small second-moment coordinates. When this makes $\langle\bx,Q(\bx)\rangle>\|\bx\|_2^2$, we have $c_\tau<1$, so calibration lowers the decoded block scale and its positive floor. The resulting smaller second-moment values permit larger preconditioner entries, potentially restoring part of the tail suppressed by uncalibrated quantization.

\begin{figure}[!htbp]
\centering
\includegraphics[width=0.95\linewidth]
    {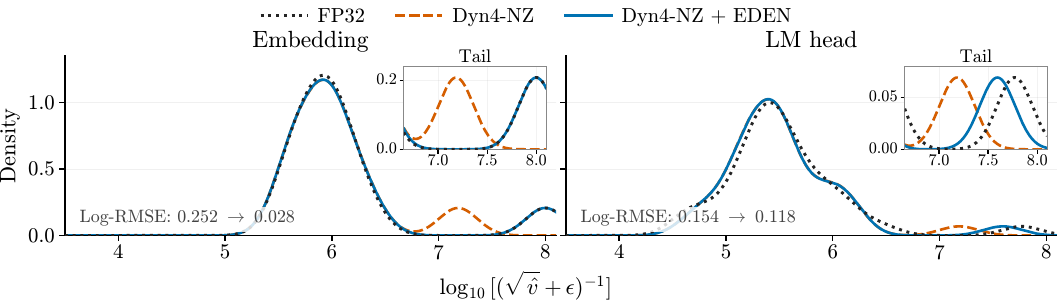}
\vspace{-7pt}
\caption{Next-step preconditioner distributions at 50\% of GPT-style 834M pretraining. EDEN partially restores the large-preconditioner tail suppressed
by Dyn4-NZ in the embedding and LM-head states. Log-RMSE measures
coordinate-wise error from FP32.}
\label{fig:eden-calibration}
\vspace{4pt}
\end{figure}

We denote zero-excluding dynamic 4-bit quantization by Dyn4-NZ. \Cref{fig:eden-calibration} shows that its positive floor suppresses the large-preconditioner tail, whereas EDEN partially restores the FP32 distribution for both the embedding and LM-head states. With FP32 first moments, EDEN lowers the final validation-loss gap of Dyn4-NZ from $0.0049$ to $0.0038$ (three-seed means). With NF4 or SDyn4 first moments, the uncalibrated gap rises to $0.0207$ or $0.0312$ (\Cref{tab:834m-grid}), motivating the first-moment study in \Cref{sec:first-moment-quantization}; \Cref{subsec:attribution} measures EDEN's contribution in that setting. Additional diagnostics in \Cref{app:eden-replays} extend the distributional comparison across training checkpoints and report one-step update errors for Lin4-NZ.

\section{First-Moment Quantization}
\label{sec:first-moment-quantization}

For a fixed preconditioner, the first moment enters the AdamW update linearly, making reconstruction error a useful initial criterion for codebook selection.
However, its storage errors also recur through the exponential moving average, so small instantaneous error need not prevent persistent directional distortion.
We therefore examine both codebook fit and the impact of the rounding rule on long-term training stability.

\subsection{Codebook choice}

\begin{figure}[!htbp]
\vspace{-0.05in}
\centering
\includegraphics[width=0.95\linewidth]
    {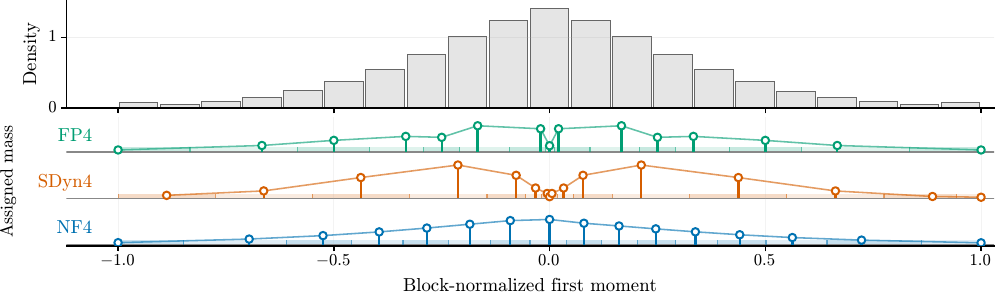}
\vspace{-5pt}
\caption{Empirical distribution of block-normalized FP32 first-moment
values from the layer-12 MLP up-projection at step 15,908 of GPT-style 834M pretraining, together with the probability mass assigned to each
reconstruction level by FP4, SDyn4, and NF4 under RTN.}
\label{fig:codebook-distribution-analysis}
\vspace{4pt}
\end{figure}

TorchAO 4-bit AdamW uses SDyn4 for the first moment. We revisit this
choice using the block-normalized first-moment distribution in
\Cref{fig:codebook-distribution-analysis}. In this snapshot, FP4 and
SDyn4 allocate several levels to narrow cells near zero, each receiving
little probability mass, whereas NF4 spreads its levels across the
high-density region. Under the same block-scaling rule, the mean
squared reconstruction errors are $1.739\times10^{-3}$,
$2.495\times10^{-3}$, and $1.056\times10^{-3}$ for FP4, SDyn4,
and NF4, respectively.

Complete single-seed pretraining of the GPT-style 834M model, with RTN for the first moment throughout, confirms this ordering. The final gaps for SDyn4, FP4, and NF4 are $0.0289$, $0.0272$, and $0.0183$ under ZE-EDEN and $0.0239$, $0.0214$, and $0.0140$ under ZIP-SR. \Cref{subsec:attribution} quantifies the NF4 change with three seeds.

\subsection{Rounding rule for the first moment}
We first use RTN to quantize the first moment $\bm_t$ with the NF4 codebook. This choice remains stable in our smaller GPT-style pretraining experiments, where we observe no comparable late-stage degradation. In GPT-style 2.7B pretraining, however, the NF4-RTN trajectory departs sharply from 32-bit AdamW during the learning-rate cooldown, whereas NF4-SR remains stable (\Cref{fig:gpt27b-m1-divergence}a).

To identify the source of this instability, we conduct matched checkpoint interventions that vary the quantization of the LM-head and non-head first moments independently. These interventions localize the failure to deterministic rounding of the LM-head first moment: applying NF4-RTN only to this state reproduces the instability, while replacing RTN with SR restores the stable trajectory. The complete intervention design and results are reported in \Cref{app:lm-head-localization}. %
Layer-specific sensitivity to optimizer-state quantization has also been reported by \citet{han2025qadammini}, who use stochastic rounding for embedding-layer momentum to address weight-norm explosion.

Having localized the failure, we examine the directional structure of the LM-head quantization error. For the LM-head first moment $\tilde\bm_t(\neq 0)$, define its radial bias as
\[
    b_{\mathrm{rad}}(\tilde\bm_t)={\langle Q(\tilde\bm_t)-\tilde\bm_t, \tilde\bm_t\rangle}/{\|\tilde\bm_t\|_2^2},
\]
which weights the radial bias of each block by its squared norm.
A negative value indicates contraction along the direction of $\tilde\bm_t$, whereas a positive value indicates expansion along that direction. 
Because NF4 spans $[-1,1]$, no value is clipped, and coordinate-wise state-space SR has zero expected radial bias conditional on $\tilde\bm_t$ and its block scales. The trajectory diagnostic therefore examines whether systematic radial contraction occurs in the realized run.

\begin{figure}[!htbp]
\centering
\includegraphics[width=\linewidth]{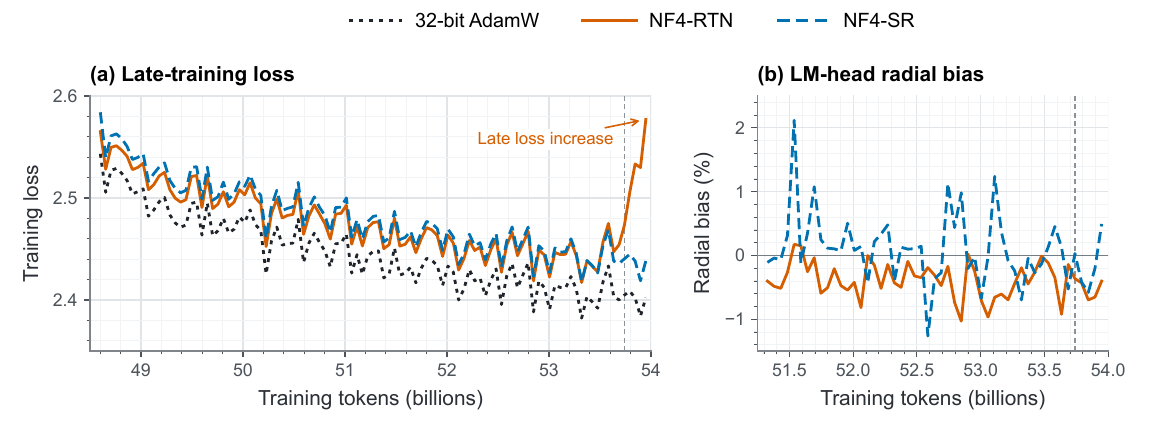}
\vspace{-13pt}
\caption{Late-training instability and LM-head directional bias in GPT-style 2.7B pretraining. (a) Cooldown training loss averaged in 100-update bins; NF4-SR uses SR on the LM-head first moment. (b) Radial bias $b_{\mathrm{rad}}(\tilde\bm_t)$ logged every 100 steps, with FP32 non-head first moments and Dyn4/Update-SR second moments. The vertical dashed line marks the first global gradient-clipping event of the NF4-RTN continuation.}
\label{fig:gpt27b-m1-divergence}
\vspace{6pt}
\end{figure}

NF4-RTN produces a predominantly negative radial bias in the LM-head first moment, whereas the bias under NF4-SR remains centered near zero (\Cref{fig:gpt27b-m1-divergence}b). Since the quantized first moment is repeatedly fed back into its recurrence, the persistent inward bias under RTN can accumulate across iterations, providing a possible explanation for the subsequent loss instability. Additional diagnostics of the matched continuations, including LM-head gradient growth and comparisons with magnitude-based quantization errors, are provided in \Cref{app:lm-head-localization}. Guided by this diagnosis, our final methods use NF4-RTN for the first moment by default and switch only the LM-head first moment to NF4-SR during the final $10\%$ of pretraining.

\section{Experiments}
\label{sec:experiments}

We now evaluate the complete optimizer configurations in \Cref{tab:optimizer-state-configurations}.
The preceding controlled comparisons examine rounding space, EDEN calibration, and first-moment quantization separately; \Cref{subsec:attribution} first combines them into a single attribution at 834M, and the remaining experiments evaluate the complete configurations in pretraining and supervised fine-tuning.

Both methods quantize the first moment with NF4, using RTN by default and switching the LM-head first moment to SR during the final 10\% of training. These shared choices are combined with two second-moment designs: ZIP-SR applies Update-SR with the zero-inclusive Dyn4 codebook, whereas ZE-EDEN applies State-RTN with the zero-excluding Dyn4-NZ codebook and EDEN calibration. Full implementation and training details are given in \Cref{app:exp-details}.

\begin{table}[!htbp]
\centering
\caption{Optimizer-state configurations evaluated in the end-to-end experiments. ZIP-SR and ZE-EDEN share the NF4 first-moment design established in \Cref{sec:first-moment-quantization}, but use the two different second-moment quantization routes developed in \Cref{sec:second-moment-quantization}. The superscript \(\ast\) indicates a phase-dependent rule: the LM-head first-moment uses NF4-RTN before the final 10\% of training and switches to NF4-SR only during that phase.}
\label{tab:optimizer-state-configurations}
\small
\begin{tabular}{@{}llllll@{}}
\toprule
\multirow{2}{*}{\textbf{Configuration}}
& \multicolumn{2}{c}{\textbf{First moment}}
& \multicolumn{3}{c}{\textbf{Second moment}} \\
\cmidrule(lr){2-3}\cmidrule(lr){4-6}
& \textbf{Non-LM-head}
& \textbf{LM-head}
& \textbf{Format}
& \textbf{Rule}
& \textbf{EDEN} \\
\midrule
32-bit
& FP32
& FP32
& FP32
& \textemdash
& $\times$\\
TorchAO 4-bit 
& SDyn4-RTN
& SDyn4-RTN
& Lin4-NZ
& State-RTN
& $\times$\\
\upsrrecipe{} 4-bit
& NF4-RTN
& NF4-RTN/SR$^\ast$
& Dyn4
& \upsrmethod{}
& $\times$\\
\nzedenrecipe{} 4-bit
& NF4-RTN
& NF4-RTN/SR$^\ast$
& Dyn4-NZ
& State-RTN
& $\checkmark$\\
\bottomrule
\end{tabular}
\end{table}

\paragraph{Persistent optimizer-state storage.}
All three 4-bit configurations use packed 4-bit storage for eligible moment tensors and therefore have essentially the same persistent optimizer-state footprint. The codebook and rounding changes require no additional per-parameter state, and ZE-EDEN absorbs its calibration factor into the existing block scale. With block size 128 and one FP32 scale per block for each moment, storing both AdamW moments requires $1.0625$ bytes per parameter on eligible tensors, compared with $8$ bytes per parameter for FP32 moment storage, corresponding to a $7.53\times$ compression. At the scale of our experiments, this corresponds to approximately $19.2$ GB less persistent moment-state storage for the 2.7B-parameter pretraining model and roughly $21$--$56$ GB less for the 3B--8B full-parameter SFT models. To keep the storage comparison controlled, all 4-bit methods use the same tensor-eligibility criterion as the TorchAO baseline. In our model configurations, only normalization-layer moment tensors fall outside this criterion and therefore remain in FP32.

\subsection{Component attribution}
\label{subsec:attribution}

The second-moment format change alone does not explain the final gains
in \Cref{tab:834m-attribution}. Replacing Lin4-NZ with Dyn4-NZ for
$\bv$ increases the validation-loss gap (step 1). Switching $\bm$
from SDyn4 to NF4 gives the largest stepwise reduction (step 2).
Both branches from step 2 then reduce the gap further: EDEN does so
while keeping Dyn4-NZ and State-RTN fixed (step 3a), and the Dyn4
\upsrmethod{} configuration yields a larger reduction (step 3b).
The step 3b comparison changes both the second-moment format and
rounding rule and uses a single run, so it does not isolate the effect
of \upsrmethod{}. Relative to TorchAO, the gaps after steps 3a and 3b
are $28.2\%$ and $43.2\%$ lower, respectively.

The sample SDs for steps 1, 2, and 3a are $8\times10^{-4}$,
$1.0\times10^{-3}$, and $1.5\times10^{-3}$, respectively. Quantized
first moments use RTN throughout, including the LM head.
\Cref{app:component-attribution} reports the full grid across
first-moment settings and both orders of changing the first- and
second-moment choices.

\begin{table}[!htbp]
\centering\footnotesize
\setlength{\tabcolsep}{3.5pt}
\caption{Component attribution on GPT-style 834M. Gaps are relative
to matched 32-bit AdamW. Steps 1, 2, and 3a are means over three random seeds;
steps 0 and 3b are single runs. Steps 3a and 3b both compare with
step 2. Positive Gain denotes a smaller gap; derived values use
unrounded gaps.}
\label{tab:834m-attribution}
\begin{tabular}{@{}ccclccrrr@{}}
\toprule
\multirow{2}{*}{\textbf{Step}}
& \multirow{2}{*}{\textbf{Setting}}
& \textbf{First moment}
& \multicolumn{3}{c}{\textbf{Second moment}}
& \multirow{2}{*}{\textbf{Gap}}
& \multicolumn{2}{c}{\textbf{vs. parent}} \\
\cmidrule(lr){3-3}\cmidrule(lr){4-6}\cmidrule(l){8-9}
& & \textbf{Format} & \textbf{Format} & \textbf{Rule} & \textbf{EDEN}
& & \textbf{$\Delta$} & \textbf{Gain} \\
\midrule
0  & TorchAO 4-bit    & SDyn4 & Lin4-NZ & State-RTN & $\times$
   & $0.0246$ & \textemdash & \textemdash \\
1  & Dyn4-NZ control  & SDyn4 & Dyn4-NZ & State-RTN & $\times$
   & $0.0312$ & $+0.0066$ & $-26.9\%$ \\
2  & NF4 first moment & NF4   & Dyn4-NZ & State-RTN & $\times$
   & $0.0207$ & $-0.0105$ & $33.7\%$ \\
\midrule
3a & \nzedenrecipe{}  & NF4   & Dyn4-NZ & State-RTN & $\checkmark$
   & $0.0177$ & $-0.0031$ & $14.7\%$ \\
3b & \upsrrecipe{}    & NF4   & Dyn4    & \upsrmethod{} & $\times$
   & $0.0140$ & $-0.0067$ & $32.5\%$ \\
\bottomrule
\end{tabular}
\end{table}

\subsection{Pretraining}
\label{subsec:pretraining}

We pretrain GPT-style models at 162M, 405M, 1.4B, and 2.7B parameters and Llama-style models at 130M, 350M, and 1.1B with three paired seeds. \Cref{tab:prelim} reports absolute 32-bit AdamW validation loss and paired 4-bit gaps. Both methods reduce TorchAO's mean gap at every model size. 
A single-seed GPT-style 834M experiment with cosine decay also finds lower gaps for \nzedenrecipe{} and \upsrrecipe{} than for TorchAO (\Cref{app:lr-schedule}).

On GPT-style 2.7B, TorchAO undergoes a sharp late-cooldown divergence. At the validation checkpoint at 95\% of training tokens, before the terminal regression its mean paired gap was \(+0.0754\,(0.0034)\), compared with \(+0.0471\,(0.0018)\) for ZE-EDEN and \(+0.0286\,(0.0024)\) for ZIP-SR, where parentheses report sample SDs. Relative to TorchAO, these correspond to gap reductions of \(37.6\%\) for ZE-EDEN and \(62.1\%\) for ZIP-SR. Hence, both proposed optimizers were already closer to 32-bit AdamW before TorchAO\textquoteright{}s terminal regression. Figure \ref{fig:gpt27b-sr90-seed42-val-loss}, in the appendix, shows the GPT-style 2.7B validation loss trajectories.
The largest reported percentage reduction is therefore the 70.1\% reduction at 1.4B; we do not compute a percentage reduction from the divergent 2.7B endpoint.

\newcommand{\ddagmark}{%
  \smash{\makebox[0pt][r]{%
    \raisebox{1.0ex}{\scriptsize\(\dagger\)}%
    \hspace{-0.3em}}}}
\begin{table}[!htbp]%
\centering

    \caption{Pretraining validation loss: absolute 32-bit loss and mean paired 4-bit gaps to matched 32-bit runs (sample standard deviation); bold marks the smallest gap per row. The final column reports the larger gap reduction achieved by ZE-EDEN or ZIP-SR. All rows use three seeds. The percentage reduction for GPT-style 2.7B is omitted because the \(\dagger\)-marked TorchAO endpoint diverges late in cooldown.}

\label{tab:prelim}

\small
\setlength{\tabcolsep}{2.3pt}
\renewcommand{\arraystretch}{1.0}

\begin{tabular*}{\linewidth}{
    @{\extracolsep{\fill}} ll ccccc @{}
}
\toprule
\multirow{2}{*}{\textbf{Family}} &
\multirow{2}{*}{\textbf{Size}} &
\multirow{2}{*}{\textbf{32-bit loss}} &
\multicolumn{3}{c}{\textbf{4-bit gap to 32-bit AdamW}} &
\multirow{2}{*}{\shortstack{\textbf{Best reduction}\\
                             {vs.\ TorchAO (\%)}}} \\
\cmidrule(lr){4-6}
& & &
\textbf{TorchAO} &
\textbf{\nzedenrecipe{}} &
\textbf{\upsrrecipe{}} &
\\
\midrule

\multirow{4}{*}{GPT-style}
& 162M
& \msd{3.1335}{0.0016}
& \msd{+0.0290}{0.0028}
& \msd{+0.0200}{0.0021}
& \msd{\mathbf{+0.0195}}{0.0020}
& 32.8 \\

& 405M
& \msd{2.8197}{0.0008}
& \msd{+0.0254}{0.0015}
& \msd{\mathbf{+0.0167}}{0.0007}
& \msd{+0.0180}{0.0005}
& 34.3 \\

& 1.4B
& \msd{2.5186}{0.0012}
& \msd{+0.0521}{0.0048}
& \msd{+0.0184}{0.0010}
& \msd{\mathbf{+0.0156}}{0.0026}
& 70.1 \\

& 2.7B
& \msd{2.4044}{0.0019}
& \msd{+1.5435\ddagmark}{0.1437}
& \msd{+0.0535}{0.0024}
& \msd{\mathbf{+0.0343}}{0.0027} 
& \textemdash \\

\midrule

\multirow{3}{*}{Llama-style}
& 130M
& \msd{2.6974}{0.0027}
& \msd{+0.0205}{0.0015}
& \msd{+0.0177}{0.0009}
& \msd{\mathbf{+0.0171}}{0.0003}
& 16.6 \\

& 350M
& \msd{2.4054}{0.0014}
& \msd{+0.0290}{0.0014}
& \msd{\mathbf{+0.0135}}{0.0013}
& \msd{\mathbf{+0.0135}}{0.0007}
& 53.4 \\

& 1.1B
& \msd{2.1467}{0.0002}
& \msd{+0.0486}{0.0029}
& \msd{+0.0226}{0.0026}
& \msd{\mathbf{+0.0198}}{0.0003}
& 59.3 \\

\bottomrule
\end{tabular*}
\end{table}

\subsection{Supervised Fine-tuning}
\label{subsec:sft}

We run full-parameter Supervised Fine-tuning (SFT) on Tulu-3~\citep{lambert2025tulu3} using Qwen3-8B-Base~\citep{yang2025qwen3} and Llama-3.2-3B~\citep{meta2024llama32} over five paired seeds, evaluating MMLU, GSM8K, HumanEval, and IFEval (\Cref{tab:sft-results}). %
Both proposed methods attain lower mean validation loss than TorchAO 4-bit AdamW. Relative to matched 32-bit AdamW, ZE-EDEN's mean paired validation-loss gap is essentially zero on Qwen3-8B and slightly negative on Llama-3.2-3B. Downstream scores remain close to those of 32-bit AdamW, without a consistent advantage across tasks for either
proposed method.
ZE-EDEN attains the lower validation loss on both SFT models, whereas ZIP-SR generally achieves smaller gaps in the pretraining comparisons.

\begin{table}[!htbp]%
\centering
\caption{All SFT runs use AdamW. The 32-bit values are five-seed means, while the 4-bit values are mean paired gaps relative to matched 32-bit runs; parentheses report sample SDs. Downstream scores are percentages.}
\label{tab:sft-results}
\small
\setlength{\tabcolsep}{3pt}
\renewcommand{\arraystretch}{1.15}

\begin{tabular*}{\linewidth}{
    @{\extracolsep{\fill}} l l ccccc @{}
}
\toprule
\textbf{Model} &
\textbf{Configuration} &
\textbf{Tulu val. loss} \(\downarrow\) &
\textbf{MMLU} \(\uparrow\) &
\textbf{GSM8K} \(\uparrow\) &
\textbf{HumanEval} \(\uparrow\) &
\textbf{IFEval strict} \(\uparrow\) \\
\midrule

\multirow{4}{*}{\textbf{\shortstack[l]{Qwen3-\\8B-Base}}} & 32-bit
& \msd{0.55135}{0.00010}
& \msd{78.5}{0.2}
& \msd{82.8}{0.6}
& \msd{72.6}{0.4}
& \msd{49.5}{2.1} \\
& TorchAO 4-bit
& \msd{+0.00060}{0.00009}
& \msd{0.0}{0.1}
& \msd{+0.2}{1.1}
& \msd{-0.1}{1.2}
& \msd{+0.6}{1.8} \\
& \nzedenrecipe{} 4-bit
& \msd{-0.00003}{0.00014}
& \msd{+0.1}{0.1}
& \msd{+0.2}{0.9}
& \msd{+1.1}{1.3}
& \msd{-0.1}{2.4} \\
& \upsrrecipe{} 4-bit
& \msd{+0.00042}{0.00016}
& \msd{+0.1}{0.2}
& \msd{+0.6}{0.6}
& \msd{+0.5}{2.0}
& \msd{-0.1}{1.9} \\

\midrule

\multirow{4}{*}{\textbf{\shortstack[l]{Llama-\\3.2-3B}}} & 32-bit
& \msd{0.73255}{0.00044}
& \msd{52.0}{2.0}
& \msd{34.6}{1.2}
& \msd{33.4}{1.8}
& \msd{37.4}{1.9} \\
& TorchAO 4-bit
& \msd{+0.00175}{0.00012}
& \msd{+1.4}{1.5}
& \msd{+0.4}{1.6}
& \msd{+0.6}{1.4}
& \msd{+0.4}{1.2} \\
& \nzedenrecipe{} 4-bit
& \msd{-0.00035}{0.00016}
& \msd{-0.4}{1.7}
& \msd{-0.2}{1.5}
& \msd{0.0}{1.5}
& \msd{+0.4}{2.4} \\
& \upsrrecipe{} 4-bit
& \msd{+0.00089}{0.00012}
& \msd{+0.4}{1.1}
& \msd{-0.4}{0.8}
& \msd{0.0}{1.4}
& \msd{+0.2}{1.3} \\

\bottomrule
\end{tabular*}
\end{table}

\section{Conclusion}
This work shows that optimizer-state quantization should account for how stored values affect optimization, rather than state-space reconstruction alone. For the second-moment, this perspective motivates \upsrmethod{} for the zero-inclusive route, while EDEN calibration separately improves the zero-excluding route by mitigating distortion in the induced preconditioner distribution. Together with NF4 and SR for the LM-head first-moment, these designs improve over TorchAO across pretraining and fine-tuning settings.

\textbf{Limitations and future work.}
Our theory compares conditional one-step preconditioner biases and optimization dynamics on a toy model, but does not establish general end-to-end convergence for quantized AdamW; extending these results to broader settings remains future work. Targeted LM-head SR alleviates the observed late-training regression, but the cause of the directional first-moment bias and its possible relationship to learning-rate decay remain unresolved. Further directions include extending our methodology below 4-bit optimizer-state quantization and investigating whether it applies to other adaptive optimizers.

\subsection*{AI Use Statement}
Building on the authors' initial ideas to compare the optimizer state and preconditioner errors of different rounding rules, we used ChatGPT (GPT-6 and GPT-5.6 Sol) to help formulate the statements of \Cref{prop:zero-cell-orders,prop:quadratic-rounding-space} for theoretical justification and polish the proof in multiple rounds of discussion.
We used ChatGPT and Claude to give feedback on the experimental design and to polish the writing and organization.
The authors verified all AI-assisted proofs and text and take full responsibility for the manuscript.

\bibliography{4bit_adam_ref}

\clearpage
\appendix

\section{Quantization Background: Codebooks and Rounding Rules}
\label[appendix]{app:quantization-background}

\subsection{4-Bit Formats and Decoded Codebooks}
\label[appendix]{app:quantization-codebooks}

A 4-bit format is specified by its normalized codebook, the set of distinct values that its $16$ codes decode to at unit scale. A normalized codebook may have fewer than $16$ elements; FP4 and Dyn4-NZ below have $15$. 

\begin{definition}[Block quantization and decoded codebook]
\phantomsection\label[definition]{def:decoded-quantization}
Let $\mathcal T\subset[-1,1]$ be the normalized codebook of a 4-bit format.
Partition a flattened tensor into blocks of $B$ consecutive elements.
For a block $\bx\in\R^B$, we use the guarded absmax base scale $s=\max\{\max_{1\leq i\leq B}|\bx_i|,\delta_s\}$, where $\delta_s=10^{-12}$. The decoded codebook is
\[
    \mathcal C=s\,\mathcal T=\{s\,c:c\in\mathcal T\}.
\]
A quantizer writes each coordinate $\bx_i$ as a level $Q(\bx_i)\in\mathcal C$.
Adjacent levels are consecutive elements of $\mathcal C$ in the order of $\R$, not codes with consecutive indices.
\end{definition}

An all-zero block has base scale $\delta_s$. Zero-inclusive codebooks reconstruct it exactly as zero, whereas an uncalibrated zero-excluding codebook reconstructs its positive decoded floor. For ZE-EDEN, $c_\tau=0$ on an all-zero block, so the effective stored scale $c_\tau s$ and the decoded block are zero. We write $\mathcal T_{\mathrm F}\subset[-1,1]$ for the normalized codebook of format $\mathrm F$, and $\mathcal C$ always denotes a decoded codebook.

\paragraph{Dynamic codebook.}
\citet{li2023memory} use the dynamic-exponent (DE) mapping, which
allocates more reconstruction levels near zero. We briefly review its
construction before giving the resulting 4-bit codebooks.
For \(F\geq0\), define uniformly spaced boundaries on \([0.1,1]\) by
\[
    p_{F,j}
    :=
    0.1+\frac{0.9j}{2^F},
    \qquad
    j=0,\ldots,2^F,
\]
and let
\[
    r_{F,k}
    :=
    \frac{p_{F,k}+p_{F,k+1}}{2},
    \qquad
    k=0,\ldots,2^F-1,
\]
denote the corresponding interval midpoints.

For an unsigned \(n\)-bit DE mapping, a nonexceptional code contains
\(E\) leading zeros, one indicator bit, and
\(F=n-E-1\) fraction bits. Its reconstruction value is
\(10^{-E}r_{F,k}\). Including the two exceptional reconstruction values
\(0\) and \(1\), the resulting normalized codebook is
\[
    \mathcal T_{\mathrm{Dyn},n}
    =
    \{0,1\}
    \cup
    \bigcup_{E=0}^{n-2}
    \left\{
        10^{-E}r_{n-E-1,k}
        :
        0\leq k<2^{n-E-1}
    \right\}.
\]
For signed DE, one bit is additionally reserved for the sign, leaving
\(F=n-E-2\) fraction bits. The normalized codebook is
\[
    \mathcal T_{\mathrm{SDyn},n}
    =
    \{0,1\}
    \cup
    \bigcup_{E=0}^{n-2}
    \left\{
        \pm 10^{-E}r_{n-E-2,k}
        :
        0\leq k<2^{n-E-2}
    \right\}.
\]
The exceptional assignment includes \(1\) but not \(-1\); consequently, the signed DE codebook is not exactly sign symmetric, and its most negative element is \(-0.8875\). 
We refer to the unsigned and signed 4-bit instances as Dyn4 and SDyn4, respectively. Substituting \(n=4\) gives $\mathcal T_{\mathrm{Dyn4}}$ and $\mathcal T_{\mathrm{SDyn4}}$.

\paragraph{FP4 (E2M1).}
We use the FP4 codebook generated by \texttt{bitsandbytes} with two exponent bits, one mantissa bit, and four total bits, including the sign bit. The normalized codebook is
\[
    \mathcal T_{\mathrm{FP4}}
    =
    \left\{
        0,\,
        \pm\frac{1}{48},\,
        \pm\frac{1}{6},\,
        \pm\frac{1}{4},\,
        \pm\frac{1}{3},\,
        \pm\frac{1}{2},\,
        \pm\frac{2}{3},\,
        \pm1
    \right\}.
\]
Its smallest positive normalized reconstruction value is $1/48$. The positive- and negative-zero codes decode to the same value, so $\mathcal T_{\mathrm{FP4}}$ has $15$ elements.

\paragraph{Zero-excluding codebook.}
\citet{li2023memory} consider two distinct ways of excluding zero
from the unsigned second-moment codebook. The first, called DE-0 in their
paper, directly removes zero from the DE mapping. We refer to its 4-bit
instance as Dyn4-NZ:
$
    \mathcal T_{\mathrm{Dyn4\text{-}NZ}}
    = \mathcal T_{\mathrm{Dyn4}}\setminus\{0\}
$.
It has $15$ elements with minimum $0.00325$, so a block with scale $s$ has decoded floor $0.00325\,s$.
\citet{li2023memory} subsequently propose a zero-excluding linear mapping that decodes code $i$ to $(i+1)/2^n$ for $i=0,\ldots,2^n-1$. At four bits,
\[
    \mathcal T_{\mathrm{Lin4\text{-}NZ}}
    =
    \left\{
        \frac{1}{16},
        \frac{2}{16},
        \ldots,
        \frac{16}{16}
    \right\},
\]
with decoded floor $s/16$. This linear mapping, rather than DE-0, is the zero-excluding second-moment
mapping proposed and used in their final 4-bit AdamW configuration. In this
work, Dyn4-NZ and Lin4-NZ therefore denote two different positive codebooks.

\paragraph{NormalFloat.}
NF4 is constructed from quantiles of a standard normal distribution, with
an exact zero included and the resulting values normalized to
\([-1,1]\)~\citep{dettmers2023qlora}. Its normalized codebook is approximately
\[
\begin{split}
\mathcal T_{\mathrm{NF4}}
=\{&
-1,\ -0.696193,\ -0.525073,\ -0.394917,\ -0.284441,\ -0.184773,\ -0.091050,\\
&0,\ 0.079580,\ 0.160930,\ 0.246112,\ 0.337915,\ 0.440710,\ 0.562617,\ 0.722957,\ 1
\}.
\end{split}
\]
The displayed values are rounded for readability; our implementation uses
the corresponding floating-point lookup table.

\subsection{Round-to-Nearest and Stochastic Rounding}
\label[appendix]{app:quantization-rounding}

Let $\mathcal C\subset\R$ be a finite set of decoded levels, such as a block's decoded codebook $s\,\mathcal T$ from \Cref{def:decoded-quantization} or the two-level codebook $\mathcal C_{0b}=\{0,b\}$ of \Cref{subsec:zero-inclusive}.
A rounding coordinate is a scalar map
\[
    \phi:
    [\min \mathcal C,\max \mathcal C]
    \to\R
\]
that is either strictly increasing or strictly decreasing. For an input
$x \in \R$ strictly between adjacent values $a<b$ in $\mathcal C$, define the
lower-endpoint interpolation weight
\[
    p_\phi(x;a,b)
    :=
    \frac{|\phi(x)-\phi(b)|}{|\phi(a)-\phi(b)|}
    \in [0,1].
\]

\begin{definition}[Round-to-nearest]
\label[definition]{def:round-to-nearest}
Round-to-nearest writes
\[
    Q_{\phi,\mathrm{RTN},\mathcal C}(x)
    =
    \begin{cases}
        a, & \text{if }p_\phi(x;a,b)>\tfrac12,\\
        b, & \text{if }p_\phi(x;a,b)<\tfrac12.
    \end{cases}
\]
If $p_\phi(x;a,b)=1/2$, RTN resolves the tie using a predetermined
rule.
\end{definition}

\begin{definition}[Stochastic rounding]
\label[definition]{def:stochastic-rounding}
Stochastic rounding writes
\[
    Q_{\phi,\mathrm{SR},\mathcal C}(x)
    =
    \begin{cases}
        a, & \text{with probability }p_\phi(x;a,b),\\
        b, & \text{with probability }1-p_\phi(x;a,b).
    \end{cases}
\]
For one fresh endpoint draw,
\[
    \E\!\left[\phi(Q_{\phi,\mathrm{SR},\mathcal C}(x))\right]
    =
    \phi(x),
\]
where $x$, $\mathcal C$, and $\phi$ are fixed.
\end{definition}

An input in $\mathcal C$ is written as itself, and an input outside $[\min\mathcal C,\max\mathcal C]$ is written as the nearer endpoint. Under absmax scaling, inputs never exceed $\max\mathcal C=s$, so this saturation occurs only at the lower end: a zero-excluding codebook writes inputs below its floor as the floor, and SDyn4 writes $x<-0.8875\,s$ as $-0.8875\,s$. State-RTN and State-SR use the strictly increasing coordinate $\phi(v)=v$. Update-RTN and Update-SR use the strictly decreasing coordinate $\phi(v)=h_t(v)$ and require a nonnegative second-moment codebook.

\newpage
\section{Experimental Details}
\label[appendix]{app:exp-details}

\subsection{Shared settings}
\label[appendix]{app:shared-settings}

All pretraining and supervised fine-tuning (SFT) runs use AdamW
with $(\beta_1,\beta_2)=(0.9,0.95)$, denominator stabilizer
$\epsilon=10^{-8}$, and global gradient-norm clipping at $1.0$.
Both tasks use BF16 mixed-precision computation with FP32
master parameters. Task-specific weight decay, learning-rate
schedules, batch sizes, and training budgets are specified below.

The 32-bit AdamW baseline stores both moments in FP32.
All 4-bit configurations use blocks of $128$ elements, packed
4-bit indices, and one FP32 scale per block. A moment tensor
is eligible for quantization if it contains at least $4096$
elements and its size is divisible by $128$; ineligible moment
tensors remain in FP32. The eligibility criterion is identical
across the 4-bit methods. The parameter update uses the working
moments before their quantized representations are stored for
the next step, as described in \Cref{sec:prelim-related}. At each storage step, block scales are selected by the absmax rule in \Cref{def:decoded-quantization}; ZE-EDEN uses the denominator safeguard $\tau=10^{-12}$ and stores the effective scale $c_\tau s$ by absorbing its calibration factor into the block scale.

The main pretraining and SFT comparisons use the configurations
in \Cref{tab:optimizer-state-configurations}. For ZIP-SR and
ZE-EDEN, the first moment uses NF4-RTN by default, with only the
LM-head first moment switching to NF4-SR after $90\%$ of training.
In the GPT-style 834M component comparisons, quantized first moments use RTN throughout training. The LM-head rule is therefore inactive in \Cref{tab:834m-attribution,tab:834m-grid}.
We report absolute mean metrics for 32-bit AdamW and mean paired
differences for the 4-bit methods. Each paired difference
subtracts the matched 32-bit result for the same seed;
parentheses denote sample standard deviations.
Single-seed component results are identified separately.

\subsection{Pretraining}
\label[appendix]{app:pretraining-details}

\paragraph{Models and data.}
The architectures are listed in \Cref{tab:arch}. The main comparison in \Cref{tab:prelim} uses GPT-style models at 162M, 405M, 1.4B, and 2.7B parameters and Llama-style models at 130M, 350M, and 1.1B parameters. The GPT-style 834M model is used for the component studies. All models are pretrained on FineWeb-Edu~\citep{penedo2024fineweb}. GPT-style models use the GPT-2 tokenizer~\citep{radford2019gpt2} and context length $2048$; Llama-style models use the Llama~2 tokenizer~\citep{touvron2023llama2} and context length $1024$.

\begin{table}[!htbp]
\centering
\small
\caption{Architectures used for pretraining and component studies.
Size denotes the rounded parameter count; $L$, $d_{\mathrm{model}}$,
heads, $n_{kv}$, and FFN denote the layer count, hidden width,
attention heads, key/value heads, and feed-forward width.
GPT-style models use RoPE/GELU with untied embeddings.
Llama-style models use RoPE/SwiGLU/RMSNorm, including an output
norm, with $n_{kv}=4$.}
\label{tab:arch}
\begin{tabular}{@{}llrrrrr@{}}
\toprule
\textbf{Family} & \textbf{Size} & $\boldsymbol{L}$
& $\boldsymbol{d_{\mathrm{model}}}$ & \textbf{Heads}
& $\boldsymbol{n_{kv}}$ & \textbf{FFN} \\
\midrule
\multirow{5}{*}{GPT-style}
& 162M & 12 & 768  & 12 & --- & 3072 \\
& 405M & 24 & 1024 & 16 & --- & 4096 \\
& 834M & 24 & 1536 & 16 & --- & 6144 \\
& 1.4B & 24 & 2048 & 16 & --- & 8192 \\
& 2.7B & 32 & 2560 & 32 & --- & 10240 \\
\midrule
\multirow{3}{*}{Llama-style}
& 130M & 12 & 768  & 12 & 4 & 2304 \\
& 350M & 22 & 1024 & 16 & 4 & 3392 \\
& 1.1B & 22 & 2048 & 32 & 4 & 5632 \\
\bottomrule
\end{tabular}
\end{table}

\paragraph{Training details.}
Each model is trained with a budget of $20$ tokens per parameter~\citep{hoffmann2022training}. The effective batch size is $256$ sequences across eight GPUs. GPT-style 405M, 1.4B, and 2.7B and Llama-style 1.1B use a per-GPU micro-batch of $16$ with two gradient-accumulation steps. The remaining configurations use a per-GPU micro-batch of $32$ without gradient accumulation.

Pretraining uses weight decay $\lambda=0.1$ and a warmup-stable-decay (WSD) schedule. The learning rate increases linearly to its peak during the first $10\%$ of updates, remains constant for the next $80\%$, and decays to zero during the final $10\%$. During cooldown, the multiplier is $1-\sqrt{u}$, where $u\in[0,1]$ is the normalized progress through cooldown.
\Cref{tab:fp32-lr-selection} reports the FP32 AdamW learning-rate sweeps used to select the model-specific peak rates.
For GPT-style 2.7B, we fix the peak rate at $1.5\times10^{-4}$ across methods and seeds without a sweep. For each model size, the selected learning rate and the remaining training hyperparameters are shared across optimizer configurations.

\begin{table}[H]
\centering
\caption{Peak learning rates for pretraining. Each sweep uses $20$ training tokens per parameter. The final column reports the minimum observed validation loss at the precision retained in the experiment logs. The GPT-style 2.7B rate is fixed without a sweep.}
\label{tab:fp32-lr-selection}
\begin{tabular}{@{}lllcc@{}}
\toprule
\textbf{Family} & \textbf{Size} & \textbf{Candidate peak LRs}
& \textbf{Selected LR} & \textbf{Val. loss} \\
\midrule
\multirow{5}{*}{GPT-style}
 & 162M  & $\{2,1,0.6,0.3\}\!\times\!10^{-3}$       & $1\!\times\!10^{-3}$ & 3.135 \\
 & 405M & $\{1,0.6,0.3,0.15\}\!\times\!10^{-3}$  & $6\!\times\!10^{-4}$ & 2.818 \\
 & 834M  & $\{0.6,0.3,0.25,0.15\}\!\times\!10^{-3}$ & $6\!\times\!10^{-4}$ & 2.641 \\
 & 1.4B     & $\{0.3,0.2,0.15,0.1\}\!\times\!10^{-3}$  & $2\!\times\!10^{-4}$ & 2.517 \\
 & 2.7B & Not swept & $1.5\!\times\!10^{-4}$ & --- \\
\midrule
\multirow{3}{*}{Llama-style}
 & 130M & $\{4,3,2,1,0.6,0.3\}\!\times\!10^{-3}$       & $2\!\times\!10^{-3}$ & 2.698 \\
 & 350M & $\{2, 1.5, 1,0.6,0.3,0.15\}\!\times\!10^{-3}$  & $1\!\times\!10^{-3}$ & 2.406 \\
 & 1.1B & $\{0.6,0.3,0.2,0.1\}\!\times\!10^{-3}$ & $3\!\times\!10^{-4}$ & 2.148 \\
\bottomrule
\end{tabular}
\end{table}

Each run uses one node with eight H200 GPUs and activation checkpointing and PyTorch DistributedDataParallel (DDP)~\citep{li2020pytorchdistributed}, except that GPT-style 2.7B pretraining runs use one node with eight B200 GPUs.

\paragraph{Evaluation.}
Each pretokenized cache has a fixed validation split of $50{,}000$ sequences. The main pretraining comparisons use three paired seeds and the reporting convention in \Cref{app:shared-settings}.

\subsection{Supervised fine-tuning}
\label[appendix]{app:sft-details}

\paragraph{Models and data.}
We perform full-parameter SFT of
Qwen3-8B-Base~\citep{yang2025qwen3} and
Llama-3.2-3B~\citep{meta2024llama32} on the Tulu-3 SFT dataset~\citep{lambert2025tulu3}. We first reserve $2\%$ of the Tulu-3 mixture for validation using a deterministic split with seed $42$. From the remaining pool, we select a fixed subset of $50{,}000$ training examples, shared across methods and training seeds.
Examples are tokenized with the corresponding model tokenizer,
truncated to $2048$ tokens, and processed without sequence
packing. The loss is computed only on the final assistant
response; preceding conversation and padding tokens are masked.

\paragraph{Training details.}
Each run trains for one epoch with an effective batch size of
$64$ examples. A per-device batch size of one across eight GPUs,
with eight gradient-accumulation steps, gives $782$ optimizer
updates, including the final partial batch.

Both models use zero weight decay and a peak learning rate of
$10^{-5}$. This rate is selected by validation loss in a
seed-$42$ FP32 AdamW sweep and then shared across optimizer
variants. The schedule uses linear warmup during the first
$3\%$ of updates, followed by cosine decay to zero.

Each run uses eight H200 GPUs and activation checkpointing. Qwen3-8B uses DeepSpeed ZeRO-2~\citep{rajbhandari2020zero} for every optimizer, whereas Llama-3.2-3B uses PyTorch DistributedDataParallel (DDP)~\citep{li2020pytorchdistributed}.

\paragraph{Evaluation.}
We evaluate the final model from each run over five paired training seeds, $42$--$46$. We report Tulu validation loss, 5-shot MMLU~\citep{hendrycks2021mmlu}, GSM8K~\citep{cobbe2021gsm8k}, HumanEval~\citep{chen2021humaneval}, and prompt-level strict IFEval~\citep{zhou2023ifeval}. MMLU is macro-averaged over its $57$ subjects. For IFEval, Qwen3-8B uses its corresponding chat template with thinking disabled, whereas Llama-3.2-3B uses the plain template used in training. MMLU, GSM8K, and HumanEval use the standard task prompts without an applied chat template. Results follow the paired reporting convention in \Cref{app:shared-settings}.

\newpage
\section{Additional Diagnostics}
\label[appendix]{app:moment-diag}

We first report the GPT-style 834M component grid underlying \Cref{tab:834m-attribution}, followed by a cosine learning-rate schedule ablation study. Next, we examine EDEN calibration with Lin4-NZ and compare one-step update errors for Lin4-NZ and Dyn4-NZ across training checkpoints. Finally, we examine GPT-style 2.7B training dynamics through additional rounding diagnostics, and validation trajectories for the complete optimizer configurations.

\subsection{Component attribution on GPT-style 834M pretraining}
\label[appendix]{app:component-attribution}

\Cref{tab:834m-grid} expands the Dyn4-NZ and Dyn4/Update-SR branches of \Cref{tab:834m-attribution} across first-moment settings, including the EDEN ablation with FP32 first moments. The TorchAO Lin4-NZ reference appears as step 0 of the main table; Lin4-NZ was not crossed with the other first-moment settings in this grid. The FP4 runs are reported in \Cref{sec:first-moment-quantization}, the state-space rules, which become unstable with zero-inclusive Dyn4, in \Cref{fig:rtn-sr-dynamics}, and the \uprtnmethod{} comparison in \Cref{subsec:zero-inclusive}.

\paragraph{Order sensitivity.}
The grid lets us change the first-moment codebook before or after modifying the second moment. For EDEN, the order changes the observed reduction in the mean gap: adding EDEN under SDyn4 lowers it by $0.39\times10^{-3}$, whereas adding it after switching to NF4 lowers it by $3.05\times10^{-3}$. Replacing Dyn4-NZ State-RTN with Dyn4 Update-SR lowers the displayed gap by about $7\times10^{-3}$ under either first-moment codebook. The Dyn4/Update-SR endpoints are single runs compared with three-seed Dyn4-NZ means. This replacement also changes the second-moment codebook, the rounding coordinate, and RTN to SR together.

\begin{table}[!htbp]
\centering\footnotesize
\caption{GPT-style 834M component grid. Validation-loss gaps to 32-bit AdamW. Rows vary the first moment (RTN throughout); columns vary the second moment. Entries with parentheses report mean (sample standard deviation) over three seeds; the others are single runs.}
\label{tab:834m-grid}
\begin{tabular}{@{}lrrc@{}}
\toprule
& \multicolumn{2}{c}{\textbf{Dyn4-NZ (State-RTN)}}
& \multicolumn{1}{c}{\textbf{Dyn4}} \\
\cmidrule(lr){2-3}\cmidrule(l){4-4}
\textbf{$\bm$}
& \multicolumn{1}{c}{\textbf{Raw}}
& \multicolumn{1}{c}{\textbf{+ EDEN}}
& \multicolumn{1}{c}{\textbf{\upsrmethod{}}} \\
\midrule
FP32  & \msd{0.0049}{0.0001}  & \msd{0.0038}{0.0003}
      & \textemdash \\
SDyn4 & \msd{0.0312}{0.0008} & \msd{0.0308}{0.0027}
      & \mone{0.0239} \\
NF4   & \msd{0.0207}{0.0010} & \msd{0.0177}{0.0015}
      & \mone{0.0140} \\
\bottomrule
\end{tabular}
\end{table}

\subsection{Sensitivity to the learning-rate schedule}
\label[appendix]{app:lr-schedule}

The main pretraining experiments use warmup-stable-decay (WSD).
We also evaluate the complete optimizer configurations on GPT-style
834M with cosine learning-rate decay. The ZE-EDEN and ZIP-SR runs
include the LM-head first-moment switch to NF4-SR during the final
$10\%$ of training.

\begin{table}[!htbp]
\centering\footnotesize
\caption{GPT-style 834M pretraining with cosine learning-rate decay.
Final validation loss at update 31,810 for seed 42. Gaps to the 32-bit AdamW run are shown directly.}
\label{tab:834m-cosine}
\begin{tabular}{@{}lrr@{}}
\toprule
\textbf{Configuration}
& \textbf{Validation loss}
& \textbf{Gap to 32-bit} \\
\midrule
32-bit AdamW      & $2.6274$ & \textemdash \\
TorchAO 4-bit     & $2.6554$ & $0.0280$ \\
\nzedenrecipe{}   & $2.6421$ & $0.0147$ \\
\upsrrecipe{}     & $2.6385$ & $0.0111$ \\
\bottomrule
\end{tabular}
\end{table}

With cosine decay, both recipes have a smaller gap to 32-bit
AdamW than TorchAO in this seed, as they do in the WSD pretraining
experiments. This run tests whether the ordering persists under
cosine decay; the WSD component runs at 834M use RTN for the
first moment throughout and therefore do not provide matched
schedule comparisons for the complete recipes.

\subsection{EDEN calibration for zero-excluding second-moment quantization}
\label[appendix]{app:eden-replays}

The main text examines EDEN calibration with Dyn4-NZ for the second
moment. Here we extend the distributional analysis to Lin4-NZ, the
zero-excluding linear codebook described in
\Cref{app:quantization-codebooks}, and compare one-step update errors
for both codebooks across training.

Using the GPT-style 834M model pretraining, we replay one update from 32-bit
AdamW checkpoints saved at 25\%, 50\%, and 75\% of training.
Within each codebook, the raw and EDEN-calibrated branches share
stored indices, the recorded post-clipping gradient, and FP32
fallback coordinates; only the effective decoded block scales differ.

\paragraph{Preconditioner distributions.}
\Cref{fig:app-eden} compares next-step preconditioner distributions
for FP32, Lin4-NZ, and Lin4-NZ with EDEN calibration, using embedding
and LM-head states. At all three sampled checkpoints, uncalibrated
Lin4-NZ produces floor-induced peaks and suppresses the
large-preconditioner tail. EDEN calibration partially restores this
tail, although residual distributional distortion remains.

\begin{figure}[!htbp]%
\centering
\includegraphics[width=\linewidth]{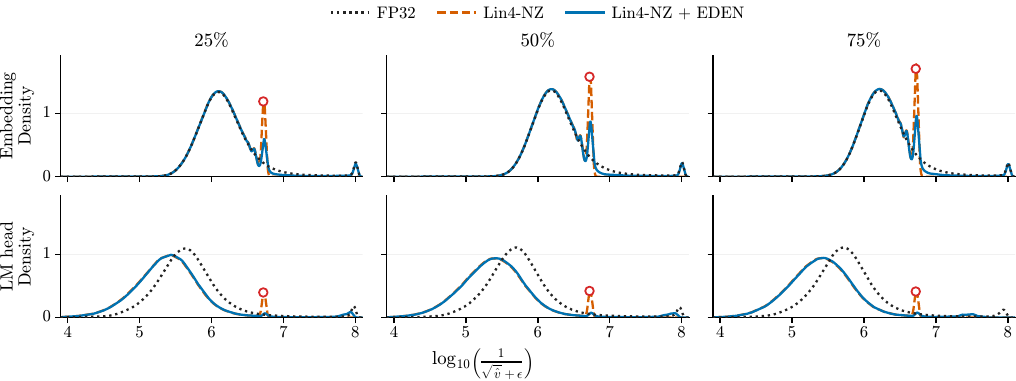}
\vspace{-18pt}
\caption{Matched next-step preconditioner distributions at 25\%, 50\%, and 75\% of training. Columns show checkpoints and rows show embedding and LM-head states. Curves compare FP32, Lin4-NZ, and Lin4-NZ with EDEN; open circles mark the Lin4-NZ floor atom.}
\label{fig:app-eden}
\end{figure}

\paragraph{One-step update errors.}
At the first resumed update, we measure the relative full-vector
adaptive-update error ${\lVert u_{\mathrm{4bit}}-u_{32}\rVert_2}/{\lVert u_{32}\rVert_2}$,
where \(u\) excludes the common learning-rate multiplier and weight decay.
At the 25\%, 50\%, and 75\% checkpoints, EDEN reduces the Lin4-NZ
relative error from \(30.99\%\) to \(30.58\%\), from \(28.80\%\)
to \(28.34\%\), and from \(27.87\%\) to \(27.38\%\), respectively.
For Dyn4-NZ, the corresponding changes are \(15.45\%\) to \(15.41\%\),
\(12.83\%\) to \(12.77\%\), and \(11.83\%\) to \(11.79\%\).
Thus, EDEN yields small but consistent reductions at these three
checkpoints, while Dyn4-NZ has lower error than Lin4-NZ under this
one-step replay metric.

\subsection{GPT-style 2.7B training dynamics and LM-head diagnostics}
\label[appendix]{app:lm-head-localization}

\paragraph{Checkpoint interventions.}
Following the comparison in \Cref{fig:gpt27b-m1-divergence}, we examine matched checkpoint interventions to isolate the effect of LM-head first-moment rounding. To distinguish insufficient four-bit capacity from the deterministic rounding path itself, we branch matched continuations from the same accumulated step-97,831 checkpoint, before the sharp loss increase. The continuations preserve the model weights, numerical optimizer state, data order, and learning-rate schedule; only the stated future first-moment storage rule changes. Every branch keeps the second-moment fixed to Dyn4 Update-SR with an FP32 current read, thereby isolating the LM-head first-moment intervention.

\begin{table}[!htbp]%
\centering
\caption{{Localization of the late-training instability in GPT-style 2.7B pretraining.} Matched continuations vary the quantization of the LM-head and non-head first moments while holding the remaining training configuration fixed.}
\label{tab:app-gpt27b-lmhead-localization}
\begin{tabular}{l l c c}
\toprule
LM-head $\bm_t$ & Non-head $\bm_t$ & Final train loss & Final val.\ loss \\
\midrule
FP32    & NF4-RTN & 2.392 & 2.440 \\
NF4-RTN & FP32    & 2.529 & 2.579 \\
NF4-SR  & FP32    & 2.391 & 2.440 \\
\bottomrule
\end{tabular}
\end{table}

Keeping the LM-head first moment in FP32 prevents the loss increase even when the remaining first moments use NF4-RTN. Conversely, applying NF4-RTN only to the LM-head first moment reproduces the instability. Replacing RTN with SR for this state restores the original training trajectory, showing that the failure is associated with deterministic LM-head rounding rather than 4-bit LM-head quantization itself.

\paragraph{Gradient exploding.}
The main text identifies a persistent inward radial bias in the LM-head first moment under NF4-RTN. We next examine the corresponding gradient dynamics of the matched continuations. Their LM-head gradient norms remain comparable initially but separate rapidly near the end of cooldown (\Cref{fig:app-gpt27b-gradient-growth}).
The LM-head pre-clipping gradient norm under NF4-RTN reaches $2.044$, whereas NF4-SR remains below $0.388$. This separation connects the LM-head quantization behavior examined in the main text with the gradient growth that accompanies the subsequent loss increase.

\vspace{-0.08in}
\begin{figure}[H]
\centering
\includegraphics[width=0.6\linewidth]{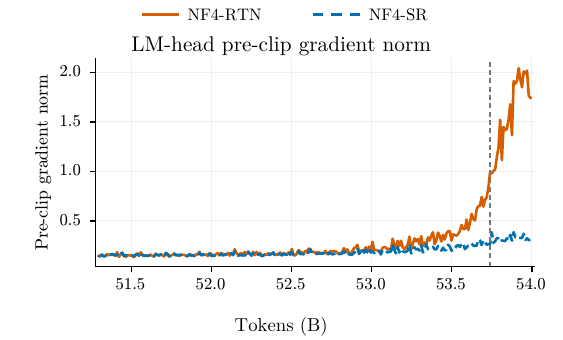}
\vspace{-0.1in}
\caption{{LM-head gradient growth in the matched continuations.} The solid orange curve denotes NF4-RTN and the dashed blue curve denotes NF4-SR. The vertical dashed line marks step $102{,}500$ ($53.74$B tokens), at which the NF4-RTN continuation first triggers global gradient clipping.}
\label{fig:app-gpt27b-gradient-growth}
\end{figure}

\paragraph{Magnitude-based diagnostics.}
Let $\tilde\bm_t$ be the working first moment before storage.
The relative RMS quantization error is $\|Q(\tilde\bm_t)-\tilde\bm_t\|_2/\|\tilde\bm_t\|_2$, and
the nonzero-to-zero fraction is the fraction of nonzero coordinates of
$\tilde\bm_t$ that are written as zero. Both measure the error of a single write.
The shadow-state error measures accumulated distortion in the working first moment before storage. A shadow moment $\bm^{\mathrm{sh}}_t$, initialized from the decoded step-97,831 checkpoint state, follows the unquantized recurrence $\bm^{\mathrm{sh}}_t=\beta_1\bm^{\mathrm{sh}}_{t-1}+(1-\beta_1)\bg_t$ at every subsequent optimizer step. The reported error is $\|\tilde\bm_t-\bm^{\mathrm{sh}}_t\|_2/\|\bm^{\mathrm{sh}}_t\|_2$. This diagnostic includes the effects of earlier stored-state errors but excludes the rounding error of the current write.

The stability of NF4-SR is not explained by smaller quantization errors. \Cref{tab:app-gpt27b-error-metrics} shows that NF4-SR has larger per-write and accumulated errors, despite avoiding both the persistent inward bias and the gradient exploding.

\begin{table}[H]
\centering
\caption{{Magnitude-based diagnostics for the matched LM-head continuations.} Values are means over $91$ recorded evaluations; lower is better. The first two rows measure instantaneous storage-rounding error; the last measures accumulated deviation from the shadow recurrence in the working first moment before storage.}
\label{tab:app-gpt27b-error-metrics}
\begin{tabular}{l c c}
\toprule
Diagnostic & NF4-RTN & NF4-SR \\
\midrule
Relative RMS quantization error (\%) & 9.21 & 14.75 \\
Nonzero-to-zero fraction (\%) & 9.16 & 12.08 \\
Accumulated shadow-state error (\%) & 20.95 & 32.18 \\
\bottomrule
\end{tabular}
\end{table}

NF4-RTN performs better under all three magnitude-based diagnostics but nevertheless develops the late-training instability. These aggregate error measures therefore do not distinguish the stable continuation from the unstable one. The persistent radial bias and subsequent gradient growth provide complementary diagnostics of the observed failure.

\paragraph{End-to-end validation dynamics.}
\Cref{fig:gpt27b-sr90-seed42-val-loss} shows the validation loss curve of the complete optimizer configurations in \Cref{tab:optimizer-state-configurations}.
The methods remain closely aligned before cooldown. During cooldown, ZE-EDEN and ZIP-SR remain close to 32-bit AdamW, while TorchAO undergoes a sharp terminal regression.

\begin{figure}[!htbp]
\centering
\includegraphics[width=0.95\linewidth]
    {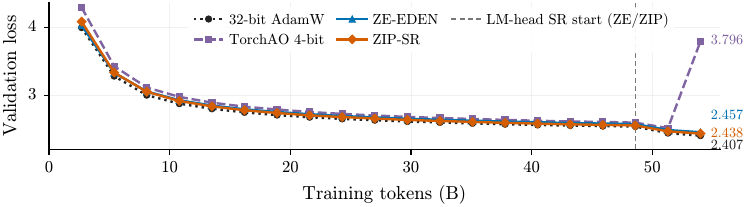}
    \vspace{-0.1in}
\caption{Validation loss during GPT-style 2.7B pretraining for a single run per configuration (seed 42). The dashed line marks the switch from NF4-RTN
to NF4-SR for the LM-head first moment at $90\%$ of training
in ZE-EDEN and ZIP-SR. TorchAO exhibits a sharp regression, whereas ZE-EDEN and ZIP-SR remain close to
32-bit AdamW.}
\label{fig:gpt27b-sr90-seed42-val-loss}
\end{figure}

\newpage
\section{Omitted Proofs in Section~\ref{subsec:zero-inclusive}}
\label[appendix]{app:rounding-proofs}

\subsection{One-Step State and Preconditioner Errors}
\label[appendix]{app:one-step-error-proofs}

\begin{proof}[Proof of \Cref{prop:zero-cell-orders}.]
\textbf{(a)} Fix the quantities in part~(a), and write $v=\tilde v_t$ and $r=r_{t+1}$.
The map $r$ is convex on $[0,\infty)$: the scalar map
$u\mapsto(\sqrt u+\epsilon)^{-1}$ is convex there, and its argument
in $r$ is affine. Since $\E[Q_{\mathrm S}(v)]=v$, Jensen's inequality
gives $\E[r(Q_{\mathrm S}(v))]\geq r(v)$.

For Update-SR, we prove concavity of $F:=r\circ h_t^{-1}$
by establishing convexity of its increasing inverse. Define
\[
    \kappa:=\sqrt{\frac{\beta_2(1-\beta_2^t)}
                           {1-\beta_2^{t+1}}}\in(0,1),
    \qquad
    c:=\frac{(1-\beta_2)g_{t+1}^2}{1-\beta_2^{t+1}}\geq0.
\]
Since $r$ and $h_t$ are strictly decreasing, $F$ is strictly
increasing. For $0<z\leq(\epsilon+\sqrt c)^{-1}$, its inverse is
\[
    F^{-1}(z)=\frac{\kappa z}{D(z)},
    \qquad
    D(z):=\sqrt{(1-\epsilon z)^2-cz^2}+\kappa\epsilon z.
\]
The square-root term is concave by direct computation, so $D$ is concave.
Moreover, $D(z)>0$ and
$D(z)\leq1-\epsilon(1-\kappa)z\leq D(0)=1$.
By concavity, this bound also implies that $D$ is nonincreasing.
The ratio $\kappa z/D(z)$ is therefore convex by the univariate convex-ratio rule \citep[Exercise~3.32(c)]{boyd2004convex}.
Thus $F^{-1}$ is convex and increasing, and $F$ is concave.

Using $\E[h_t(Q_{\mathrm U}(v))]=h_t(v)$, Jensen's inequality gives
\[
    \E[r(Q_{\mathrm U}(v))]
    =\E[F(h_t(Q_{\mathrm U}(v)))]
    \leq F\!\left(\E[h_t(Q_{\mathrm U}(v))]\right)
    =r(v).
\]
Finally, $r$ is strictly decreasing and $Q_{\mathrm U}(v)$ assigns
positive probability to $a<b$, so $\E[r(Q_{\mathrm U}(v))]>r(b)$.
Together these inequalities prove part~(a).

\textup{(b)} Under the gradient assumption, $r(0)=\Theta(\epsilon^{-1})$, whereas $r(v)$ and $r(b)$ remain bounded. For $p_0=\Prob(Q(v)=0)\in[0,1]$,
\[
    \Delta_r=p_0r(0)+(1-p_0)r(b)-r(v)
            =p_0r(0)+\mathcal O(1),
\]
with a remainder uniform in $p_0$. Since $\epsilon r(0)$ is bounded above and away from zero, $\Delta_r$ is bounded if and only if $p_0=\mathcal O(\epsilon)$. If $p_0=\Theta(1)$, the same identity gives $\Delta_r=+\Theta(\epsilon^{-1})$.
\end{proof}

\paragraph{Comparison with a matched positive floor.}
For $0<v=\tilde v_t<b$, a zero-excluding quantizer with decoded minimum $b$ stores $b$ deterministically. Write $r=r_{t+1}$ and let $Q_{\mathrm U}$ denote Update-SR between $0$ and $b$.
The corresponding mean errors
$\Delta_r^{\mathrm{NZ}}=r(b)-r(v)$ and
$\Delta_r^{\mathrm U}=\E[r(Q_{\mathrm U}(v))]-r(v)$ satisfy
$\Delta_r^{\mathrm{NZ}}<\Delta_r^{\mathrm U}\leq0$ by part~(a).
With $p_0=\Prob(Q_{\mathrm U}(v)=0)\in(0,1)$, the reduction in absolute mean error is
\[
    |\Delta_r^{\mathrm{NZ}}|-|\Delta_r^{\mathrm U}|
    =p_0\bigl[r(0)-r(b)\bigr]>0.
\]
This comparison holds for every $\epsilon>0$ and every next gradient at the same decoded floor $b$.

\paragraph{Derivation of \Cref{tab:zero-cell-orders}.}
Fix $0<v=\tilde v_t<b/2$ and assume $|g_{t+1}|=\mathcal O(\epsilon)$.
Write $r=r_{t+1}$ and define
\[
    A=\sqrt{1-\beta_2^t}\left(\frac{1}{\sqrt v}-\frac{1}{\sqrt b}\right)>0,
    \qquad
    \kappa=\sqrt{\frac{\beta_2(1-\beta_2^t)}{1-\beta_2^{t+1}}}\in(0,1).
\]
Let $\gamma_\epsilon=(|g_{t+1}|/\epsilon)
\sqrt{(1-\beta_2)/(1-\beta_2^{t+1})}$, which is bounded by the
gradient assumption. Evaluating $r$ at zero and expanding at the
fixed positive inputs $v,b$ yields
\[
    r(0)=\frac{1}{\epsilon(1+\gamma_\epsilon)},
    \qquad r(v)-r(b)=\frac{A}{\kappa}+\mathcal O(\epsilon).
\]
For a zero-write probability $p_0$, the errors are
$\Delta_v=b-v-bp_0$ and
$\Delta_r=r(b)-r(v)+p_0[r(0)-r(b)]$.
State-RTN has $p_0=1$ and hence $\Delta_v=-v$.
State-SR has $p_0=1-v/b$ and hence $\Delta_v=0$.
Part~(b) gives $\Delta_r=+\Theta(\epsilon^{-1})$ for both rules.

The Update-RTN boundary solves $h_t(v)=(h_t(0)+h_t(b))/2$, giving
\begin{equation}
\label{eq:preconditioner-rtn-boundary}
    v_{\mathrm{th}}
    =\frac{b\epsilon^2}
           {\bigl(\sqrt{b/(1-\beta_2^t)}+2\epsilon\bigr)^2}
    =(1-\beta_2^t)\epsilon^2+\mathcal O(\epsilon^3).
\end{equation}
For sufficiently small $\epsilon$, the fixed input $v>0$ exceeds this
boundary, so $p_0=0$, $\Delta_v=b-v$, and
$\Delta_r=-A/\kappa+\mathcal O(\epsilon)=-\Theta(1)$.

For Update-SR, \eqref{eq:q-sr-definition} gives $p_0=A\epsilon+\mathcal O(\epsilon^2)$. Substituting into the error identities yields
\[
\begin{aligned}
    \Delta_v&=b-v-bA\epsilon+\mathcal O(\epsilon^2),\\
    \Delta_r&=-A\left(\frac{1}{\kappa}-\frac{1}{1+\gamma_\epsilon}\right)
              +\mathcal O(\epsilon).
\end{aligned}
\]
Since $1/\kappa-1/(1+\gamma_\epsilon)\geq1/\kappa-1>0$ and $\gamma_\epsilon$ is bounded, $\Delta_r=-\Theta(1)$. This establishes the remaining row of the table.

\subsection{Rounding-Space Separation on a Scalar Quadratic}
\label[appendix]{app:quadratic-rounding-separation}

\begin{proof}[Proof of Proposition~\ref{prop:quadratic-rounding-space}\textup{(a)}.]
Fix a stepsize $\eta$ and set $c=\sqrt{1-\beta_2}$. For either RTN rule, let $Q$ denote its quantizer and $(\tilde v_k,w_k)$ its working moments and iterates. In this part, Update-RTN uses the coordinate $h(v)=(\sqrt v+\epsilon)^{-1}$.
Starting from $\tilde v_0=0$, the recurrences are
\[
    \tilde v_k=\beta_2Q(\tilde v_{k-1})+(c\mu w_{k-1})^2,
    \qquad
    w_k=w_{k-1}\left(1-\frac{\eta\mu}{\sqrt{\tilde v_k}+\epsilon}\right).
\]
Define the candidate cycle amplitude by
$w_\eta=(\eta\mu/2-\epsilon)/(c\mu)$.
The stepsize bounds give
$\sqrt{\tau_\epsilon}<c\mu w_\eta<\sqrt{b/2}$.
Choose $\delta>0$ sufficiently small that the closed interval
$J=[w_\eta-\delta,w_\eta+\delta]\subset(0,\infty)$ satisfies
\[
    \tau_\epsilon<(c\mu y)^2<b/2,
    \qquad c\mu y+\epsilon<\eta\mu
    \quad\text{for all }y\in J,
\]
and take $I=\operatorname{int}J$. Fix a common initialization $w_0\in I$.

For State-RTN, suppose $Q(\tilde v_{k-1})=0$ and
$y=|w_{k-1}|\in J$. Then $\tilde v_k=(c\mu y)^2<b/2$, so the next
write is again zero, and the update multiplier is negative.
Since $c\mu w_\eta+\epsilon=\eta\mu/2$,
\[
    |w_k|-w_\eta
    =(y-w_\eta)\frac{\epsilon-c\mu y}{\epsilon+c\mu y}.
\]
The constant $q=\max_{y\in J}|\epsilon-c\mu y|/(\epsilon+c\mu y)$
is less than one. Hence the amplitudes remain in $J$, and induction gives
$\bigl||w_k|-w_\eta\bigr|\leq q^k|w_0-w_\eta|$.
The signs alternate, so $w_{2k}\to w_\eta$ and
$w_{2k+1}\to-w_\eta$. This proves local attraction of the two-cycle.

For Update-RTN, the first working moment satisfies
$\tilde v_1=(c\mu w_0)^2>\tau_\epsilon$, so the first write is $b$.
Every subsequent write also equals $b$, since a preceding write of $b$
gives $\tilde v_k\geq\beta_2b>b/2>\tau_\epsilon$.
The two rules share the same first iterate $w_1$.
Because $\eta\mu<2(\sqrt{\beta_2b}+\epsilon)$, the contraction factor
\[
    \rho:=\max_{0\leq y\leq|w_1|}
    \left|1-\frac{\eta\mu}
    {\sqrt{\beta_2b+(c\mu y)^2}+\epsilon}\right|
\]
is less than one by continuity on the compact interval. Induction yields
$|w_k|\leq\rho^{k-1}|w_1|$ for $k\geq1$, proving geometric
convergence to zero from the same interval $I$.
\end{proof}

\paragraph{Remark before proof of Proposition~\ref{prop:quadratic-rounding-space}\textup{(b)}.}
A zero write does not affect the current update, but it removes the historical second-moment term from the next
recurrence. If the state and the next squared gradient shrink together into the zero-adjacent cell, the resulting
zero-endpoint read grows while the positive-endpoint read remains bounded. A learning rate that decays with the state
scale can therefore leave the zero branch expansive while making the positive branch weakly contractive. State-SR
selects the zero branch with probability approaching one, whereas \upsrmethod{} makes it sufficiently rare for the
positive-branch contraction to dominate.

\vspace{0.08in}
\begin{proof}[Proof of Proposition~\ref{prop:quadratic-rounding-space}\textup{(b)}.]
For the quadratic objective, $g_{t+1}=\mu w_t$, and $\beta_1=0$ makes the working first moment equal to this gradient. The first update influenced by a quantized value $z\in\{0,b\}$ is therefore
\[
    w_{t+1}
    =
    \left(
        1-\eta_{t+1}\mu r_{t+1}(z)
    \right)w_t.
\]
For either rounding rule, let $\Delta_t$ denote the conditional
logarithmic drift appearing in the proposition.
The locally tracked regime and the definition of $r_{t+1}$ give
\[
\begin{aligned}
    r_{t+1}(0)=\Theta\big({\tilde{v}_t}^{-1/2}\big),\qquad
    r_{t+1}(b)=\Theta(1),\qquad
    \frac{r_{t+1}(b)}{r_{t+1}(0)}
    =\Theta\big(\sqrt{\tilde{v}_t}\big).
\end{aligned}
\]
The assumed scaling of the zero endpoint implies
$\eta_{t+1}=\Theta(\sqrt{\tilde{v}_t})\to0$. Consequently,
\begin{align*}
    \log\left|
        1-\eta_{t+1}\mu r_{t+1}(0)
    \right|
    &\longrightarrow
    \log(\alpha-1),\\
    \log\left|
        1-\eta_{t+1}\mu r_{t+1}(b)
    \right|
    &=
    -\Theta\big(\sqrt{\tilde{v}_t}\big).
\end{align*}
State-SR selects zero with probability $1-\tilde{v}_t/b$ and $b$ with
probability $\tilde{v}_t/b$. Therefore
\begin{align*}
    \Delta_t
    &=\left(1-\frac{\tilde{v}_t}{b}\right)
    \log\left|
        1-\eta_{t+1}\mu r_{t+1}(0)
    \right| + \frac{\tilde{v}_t}{b} \,
    \log\left|
        1-\eta_{t+1}\mu r_{t+1}(b)
    \right|\\
    &\longrightarrow
    \log(\alpha-1)>0.
\end{align*}
For \upsrmethod{}, the probability of selecting zero is
\[
    p_0
    =
    \frac{
        \epsilon(1-\sqrt{\tilde{v}_t/b})
    }{
        \sqrt{\tilde{v}_t/(1-\beta_2^t)}+\epsilon
    }
    =\smallO\big(\sqrt{\tilde{v}_t}\big),
\]
where the final relation follows from
$\epsilon=\smallO(\tilde{v}_t)$. Hence
\begin{align*}
    \Delta_t
    =p_0
    \log\left|
        1-\eta_{t+1}\mu r_{t+1}(0)
    \right|
    +\left(1-p_0\right)
    \log\left|
        1-\eta_{t+1}\mu r_{t+1}(b)
    \right|
    =-\Theta\big(\sqrt{\tilde{v}_t}\big),
\end{align*}
which is negative for all sufficiently small $\tilde{v}_t$.
\end{proof}

\end{document}